\newtheorem{theorem}{Theorem}[section]
\newtheorem{lemma}[theorem]{Lemma}
\newtheorem{definition}[theorem]{Definition}
\newtheorem{fact}[theorem]{Fact}
\newtheorem*{theorem*}{Theorem}
\newtheorem*{problem*}{Problem}
\DeclareMathOperator{\A}{\mathcal{A}\xspace}
\newcommand{\f}{p\xspace}
\newcommand{\RS}{RB\xspace}
\DeclareMathOperator{\D}{\mathcal{D}}
\DeclareMathOperator{\I}{\mathcal{I}}
\DeclareMathOperator{\x}{\mathbf{x}\xspace}
\DeclareMathOperator{\y}{\mathbf{y}\xspace}
\DeclareMathOperator{\z}{\mathbf{z}\xspace}
\DeclareMathOperator{\w}{\mathbf{w}\xspace}
\DeclareMathOperator{\opt}{\mathrm{OPT}\xspace}
\DeclareMathOperator{\lp}{\mathrm{LP}\xspace}
\DeclareMathOperator{\supp}{\mathcal{A}\xspace}
\newcommand\Ex[2]{\mathop{\underset{#1}{\mathbb{E}}\left[#2\right]}}
\newcommand\Pro[1]{\mathop{\mathbb{P}\left[#1\right]}}
\newcommand\Prob[2]{\mathop{\underset{#1}{\mathbb{P}}\left[#2\right]}}
\crefname{algocfline}{Algorithm}{Algorithms}
\Crefname{algocfline}{Algorithm}{Algorithms}
\title{Non-Stationary Bandits under Recharging Payoffs:\\ Improved Planning with Sublinear Regret}
\author{Orestis Papadigenopoulos\\ Department of Computer Science \\
The University of Texas at Austin \\ ~~~~~~~~~~~~~~~~~~\texttt{papadig@cs.utexas.edu}~~~~~~~~~~~~~~~~~~  
\and Constantine Caramanis \\ Electrical and Computer Engineering \\ The University of Texas at Austin \\ \texttt{constantine@utexas.edu}
\and Sanjay Shakkottai \\ Electrical and Computer Engineering \\ The University of Texas at Austin \\ \texttt{sanjay.shakkottai@utexas.edu}
}
\date{}
\begin{document}

\maketitle

\begin{abstract}
The stochastic multi-armed bandit setting has been recently studied in the non-stationary regime, where the mean payoff of each action is a non-decreasing function of the number of rounds passed since it was last played. This model captures natural behavioral aspects of the users which crucially determine the performance of recommendation platforms, ad placement systems, and more. Even assuming prior knowledge of the mean payoff functions, computing an optimal planning in the above model is NP-hard, while the state-of-the-art is a $1/4$-approximation algorithm for the case where at most one arm can be played per round.
We first focus on the setting where the mean payoff functions are known. In this setting, we significantly improve the best-known guarantees for the planning problem by developing a polynomial-time $(1-{1}/{e})$-approximation algorithm (asymptotically and in expectation), based on a novel combination of randomized LP rounding and a time-correlated (interleaved) scheduling method. Furthermore, our algorithm achieves improved guarantees -- compared to prior work -- for the case where more than one arm can be played at each round. Moving to the bandit setting, when the mean payoff functions are initially unknown, we show how our algorithm can be transformed into a bandit algorithm with sublinear regret. 
\end{abstract}

\section{Introduction}
In the last two decades, the predominant rise of the social media industry has made the notion of a {\em newsfeed} an integral part of our lives. In a newsfeed, a user observes a structured sequence of content items (posts, photos etc.) particularly selected by the platform according to her/his preferences. Apart from social media, an analogous idea -- potentially relabeled -- also appears in different domains as, for example, ``frequently bought together'' in e-commerce, ``shuffling similar songs'' in music recommendation, or ``recommended articles'' in scholarly literature indexing databases. 
Whether it is measured in terms of click-rate or time devoted, the high-level objective of newsfeeds is fairly well-known: to maximize the user's engagement with the platform. In many applications, however, achieving this objective is not as simple as identifying the user's ``favorite'' content, given that her/his satisfaction can depend on the time passed since the same (or similar) content has been observed. As an example, a user's engagement can worsen if a social media feed (resp., a music recommendation platform) constantly presents content from the same source (resp., same artist). 

Motivated by such scenarios, researchers have recently studied online decision making problems capturing the notion of ``recovering'' payoffs, namely, scenarios where the payoff of an action drops (to zero) after each play and then slowly increases back to a baseline. In the context of online learning, these non-stationary models interpolate between multi-armed bandits, where the environment is assumed to be intact, and reinforcement learning, since the actions may now alter the future environment in a structured manner. 
These models are wide enough to capture many real-life applications, yet special enough to accept efficiently computable (near-)optimal solutions. In this direction, the following general model was first introduced (under slightly different assumptions) by Immorlica and Kleinberg \cite{KI18} and has been recently studied by Simchi-Levi et al. \cite{SLZZ21}: 

\begin{problem*}[Multi-armed bandits under recharging payoffs]
We consider a set of {\em actions} (or {\em arms}), where each arm $i$ is associated with a {\em mean payoff function} $\f_i$. For each $\tau$, $\f_i(\tau)$ is the expected payoff collected for playing action $i$, when $i$ has been played before exactly $\tau$ rounds (we refer to $\tau$ as the {\em delay} of an arm at a specific round). For each arm, we assume that its payoff function: (a) is monotone non-decreasing in $\tau$, and (b) has a known finite {\em recovery time}, namely, a specific delay after which the function remains constant. 
At each round, the player pulls a subset of at most $k$ actions, observes the realized payoffs of the arms played (each computed with respect to the delay of the corresponding action), and collects their sum. The objective is to maximize the expected cumulative payoff collected within an unknown time horizon.
\end{problem*}

We denote by $k$-\RS an instance of the problem where at most $k$ arms can be played at each round. Further, we distinguish between the {\em planning} setting, where the payoff functions are known to the player a priori (thus, payoffs can be considered deterministic), and the {\em learning} setting, where these are initially unknown and the user assumes semi-bandit feedback on the payoffs of the played actions. In the former case, the goal is to design an efficient algorithm which closely-approximates the optimal planning (since this is generally an NP-hard problem), while in the latter, the objective is to construct a bandit algorithm of sublinear regret, defined as the difference in expected payoff between a planning algorithm and its bandit variant due to the initial absence of information of the latter.

\subsection{Background and Related Work}

Immorlica and Kleinberg \cite{KI18} first study the $1$-\RS problem and provide a $(1-\epsilon)$-approximation (asymptotically) for the planning setting, under the additional assumption that the payoff functions are weakly concave over the whole time horizon. On the other extreme, Basu et al.\ \cite{BSSS19} provide a long-run $\left(1 - \nicefrac{1}{e}\right)$-approximation for the case where, after each play, an arm {\em cannot} be played again for a fixed number of rounds -- a problem which can be cast as a special instance of $1$-\RS using Heaviside step payoff functions.\footnote{Recall that the Heaviside step function is defined such that $H(\tau) = 1$ for $\tau \geq 0$, and $H(\tau) = 0$, otherwise.} Shortly after the introduction of the problem, a number of special cases and variations have been studied \cite{CCB19,PC21,PBG19,BPCS20,BCMT20,CCB19,LCCBGB21,APBCS21} (we address the reader to \Cref{sec:relatedwork} for an overview). 

In their recent work, Simchi-Levi et al.\ \cite{SLZZ21} study the $k$-\RS problem and prove the first $\mathcal{O}(1)$-approximation guarantee under no additional assumptions. For large $k$, this result is the state-of-the-art. However, for small $k$, understanding the magnitude of the constant becomes the primary theoretical question. Specifically, the motivating case of newsfeeds (when, for instance, the content items are presented to a user {sequentially}) can be modeled as an instance of $1$-\RS. In this case, the approximation guarantee of \cite{SLZZ21} becomes $\nicefrac{1}{4}$, which is significantly weaker compared to both the $(1-\epsilon)$-approximation of \cite{KI18} for concave functions and the $\left(1 - \nicefrac{1}{e} \right)$-approximation of \cite{BSSS19} for the most ``extreme'' example of monotone convex functions, i.e., that of Heaviside step functions. The above observation indicates that either the approximability status of the problem is not well-understood, or that the problem does not gradually become ``easier'' by increasing the concavity of the payoff functions.

\subsection{Technical Contributions}
In our work, we resolve this discrepancy by designing a long-run $\left(1 - \nicefrac{1}{e}\right)$-approximation algorithm (in expectation) for $1$-\RS, which improves the state-of-the-art due to \cite{SLZZ21}. Simultaneously, our algorithm enjoys the same asymptotic guarantee of $\left(1 - \mathcal{O}(\nicefrac{1}{\sqrt{k}})\right)$ for the general case of $k$-\RS as in Simchi-Levi et al.\ \cite{SLZZ21} with improved and explicit constants, as opposed to the guarantees of \cite{SLZZ21} which do not come in a closed-form. Our algorithm is based on a novel combination of linear programming (LP) rounding and a time-correlated (interleaved) scheduling method, and is significantly simpler to implement compared to prior work. For the case where the mean payoff functions are initially unknown, we show how our algorithm can be transformed into a bandit algorithm with sublinear regret guarantees. 

\subsection{Key Ideas and Intuition}

Henceforth, we refer to any instance where the payoff function of each action $i$ has the form $q_i(\tau) = p_i \cdot H(\tau - d_i)$, where $p_i \in [0,1]$, $d_i \in \mathbb{N}$, and $H(\cdot)$ the Heaviside step function, as the {\em Heaviside $k$-\RS} problem. The fact that our $\left(1 - \nicefrac{1}{e}\right)$-approximation for $1$-\RS matches the state-of-the-art bound obtained for Heaviside $1$-\RS \cite{BSSS19,PC21} is no coincidence. 
Our solution technique shows that each $k$-\RS problem is in fact (approximately) hiding a Heaviside $k$-\RS problem.

\paragraph{Structural characterization of a natural LP relaxation.} 
A key idea in both \cite{KI18} and \ \cite{SLZZ21} is to construct a concave relaxation of the optimal solution. Instead, we take a more direct approach and construct a natural LP relaxation (see \Cref{sec:relaxation}) of the optimal average payoff (our construction uses a uniform upper bound on the recovery time of any arm). 
By carefully analyzing the structure of our LP, we prove that its extreme point solutions follow a particularly interesting {\em sparsity pattern}: there exist unique delays \{$\tau_i$\} associated with the arms (which we refer to as ``critical'' delays), such that playing each arm $i$ exactly once every $\tau_i$ rounds matches the average payoff of the relaxation. As we show, an exception to the above rule can be at most a single arm, which we refer to as {\em irregular}. The above observation already hints that our problem could potentially be reduced to an instance of Heaviside $k$-\RS for which better approximation guarantees (compared to \cite{SLZZ21}) are known to exist (at least for small $k$) \cite{BSSS19,PC21}.

\paragraph{Improved approximation guarantees for the planning $k$-\RS.} 
Constructing a planning where each arm is played at a rate indicated by its critical delay (as described above) is generally infeasible, since these rates stem from a relaxation of the problem. However, targeting these rates becomes the starting point of our approximation. 
In \Cref{sec:algorithm}, we design a novel algorithm for the $k$-\RS problem, called ``Randomize-Then-Interleave'', which starts from computing an optimal solution to our LP relaxation. By combining the sparse structure of this solution with a {\em randomized rounding} step for determining a critical delay for the (possible) irregular arm, our algorithm produces a ``proxy'' instance of the Heaviside $k$-\RS problem. Finally, the algorithm applies the {\em interleaved scheduling} method of \cite{PC21} on the resulting Heaviside $k$-\RS instance. As we prove, the above algorithm achieves a long-run $\left(1 - \mathcal{O}(\nicefrac{1}{\sqrt{k}})\right)$-approximation guarantee (in expectation), but with improved and explicit constants compared to \cite{SLZZ21}. Importantly, for the particular case of $k$-\RS, our guarantee is stronger than the one provided in the original work of \cite{PC21}. 

As a first step, in \Cref{sec:easy} we study the performance of our algorithm under the simplifying assumption that the solution returned by our LP relaxation does not contain an irregular arm, that is, every arm is associated with a unique critical delay. For this case, the approximation guarantee of our algorithm follows by critically leveraging existing results on the {\em correlation gap} of the weighted rank function of uniform matroids due to \cite{yan11} (see \Cref{sec:definition}). In \Cref{sec:irregular}, we relax this simplifying assumption by carefully studying the contribution of the irregular arm to the produced solution. Our strategy relies on the novel idea of considering parallel (fictitious) copies of the irregular arm, one for each possible critical delay. While the marginal probabilities of each copy being played under the corresponding delay at any round are consistent with the initial LP solution, these events are {\em mutually exclusive}. Nevertheless, we show that expected payoff collected can only decrease by assuming that the parallel copies were instead independent. Having established that, the analysis reduces back to the ``easier'' case where an irregular arm does not exist. 

\paragraph{A bandit adaptation with sublinear regret.} Finally, we turn our focus to the learning setting, where the mean payoff functions are initially unknown, and the player has semi-bandit feedback on the realized payoffs of the played arms of each round. In this setting, we begin by analyzing the robustness of our planning algorithm under small perturbations of the payoff functions. Then, we provide sample complexity results to bound the number of samples required to get accurate estimates of the mean payoff functions. By combining the above elements, we transform our planning algorithm into a bandit one, based on an Explore-then-Commit (ETC) scheme, and prove that the latter achieves sublinear regret relative to its planning counterpart. We present these results in \Cref{sec:learning}. 

All the omitted proofs of our results have been moved to the Appendix.

\section{Preliminaries}\label{sec:definition}

\paragraph{Problem definition and notation.} We consider a set $\A = \{1,2, \ldots, n\}$ of $n$ {\em actions} (or {\em arms}), where each arm $i \in \A$ is associated with a (mean) {\em payoff function} $\f_i: \mathbb{N} \rightarrow [0,1]$. For each $\tau \in \mathbb{N}$, $\f_i(\tau)$ is the expected payoff for playing action $i \in \A$, when $i$ has been played before exactly $\tau$ rounds (i.e., $\tau = 1$ if the action has been played in the previous round). We refer to the number of rounds since the last play of an arm as the {\em delay}. For each arm $i \in \A$, we assume that the function $\f_i(\tau)$: (a) is monotone non-decreasing in $\tau$, and (b) has a polynomial {\em recovery time} $\tau^{\max}_i$ such that $\f_i(\tau) = \f_i(\tau^{\max}_i)$ for every $\tau \geq \tau^{\max}_i$. We assume knowledge of a universal upper bound $\tau^{\max}$ on the recovery time of all arms, i.e., $\tau^{\max} \geq \max_{i \in \A} \tau^{\max}_i$.\footnote{Due to the polynomial recovery time assumption, oracle access to the payoff functions is not required.} At each round, the player plays a subset of at most $k<n$ actions and collects the sum of their associated payoffs (each computed with respect to the delay of the corresponding action at the time it is played). The objective is to maximize the cumulative payoff collected within an (unknown) horizon of $T$ rounds. We refer to the above instance as $k$-\RS. 

As a convention, we assume that the delay of each arm $i$ is initially equal to $1$ (i.e., all arms are played at time $t=0$). Note that, even if we assume that all arms start with delay $\tau^{\max}$ (i.e., the arms have never been played before), our results do not change qualitatively. 

For any non-negative integer $n \in \mathbb{N}$, we define $[n] = \{1,2, \dots, n\}$. For any vector $\w \in \mathbb{R}^n$ and set $S \subseteq [n]$, we denote $\w(S) = \sum_{i \in S} w_i$. Finally, throughout this work we assume that, when comparing between different payoffs, ties are broken arbitrarily.

\paragraph{Continuous extensions and the correlation gap.}
Let $f : 2^{[n]} \rightarrow [0,\infty)$ be a set function defined over a finite set of $n$ elements. For any vector $\y \in [0,1]^n$, we denote by $\D(\y)$ a distribution over $2^{[n]}$ with marginal probabilities $\y = (y_i)_{i \in [n]}$. We denote by $S \sim \D(\y)$ a random subset $S \subseteq [n]$ sampled from the distribution $\D({\y})$. Further, we denote by $S \sim \I(\y)$ a random subset $S \subseteq [n]$ which is constructed by adding each element $i \in [n]$ to $S$, independently, with probability $y_i$. 

We recall two canonical continuous extensions of a set function (see \cite{calinescu2007maximizing,schrijver03}): 

\begin{definition}[Multi-linear extension] \label{def:multilinear}
For any vector $\y \in [0,1]^n$, the {\em multi-linear extension} of a set function $f$ is defined as
\begin{align*}
    F(\y) = \Ex{S \sim \I(\y)}{f(S)} = \sum_{S \subseteq [n]} f(S) \prod_{i \in S} y_i \prod_{i \notin S} (1-y_i).
\end{align*}

\end{definition}

\begin{definition}[Concave closure] \label{def:concaveclosure}
For any vector $\y \in [0,1]^n$, the {\em concave closure} of a set function $f$ is defined as
\begin{align*}
    f^+(\y) = \sup_{\D(\y)} \Ex{S \sim \D(\y)}{f(S)} = \sup_{\boldsymbol \alpha} \bigg\{ \sum_{S \subseteq [n]} \alpha_S f(S) \mid \sum_{S \subseteq [n]} \alpha_S \bm{1}_S = \y, \sum_{S \subseteq [n]} \alpha_S = 1, {\boldsymbol \alpha} \succeq 0 \bigg\},
\end{align*}
where $\bm{1}_S \in \{0,1\}^n$ is an indicator vector such that $(\bm{1}_S)_i = 1$, if $i \in S$, and $(\bm{1}_S)_i = 0$, otherwise.
\end{definition}

For any non-negative weight vector $\w \in [0, \infty)^n$, of particular importance in the analysis of our algorithm is the function $f_{\w,k} : 2^{[n]} \rightarrow [0, \infty)$, defined as
\begin{align}
f_{\w,k}(S) = \max\{\w(I) ~\mid~ I \subseteq S \text{ and } |I| \leq k\}.    \label{eq:uniform}
\end{align}
We remark that $f_{\w,k}$ corresponds to the {\em weighted rank function} of the (rank-$k$) uniform matroid under a weight vector $\w$ and, hence, is non-decreasing submodular \cite{schrijver03}.\footnote{A function $f: 2^{[n]} \rightarrow \mathbb{R}$ is submodular, if for any $S,T \subseteq [n]$, it holds $f(S \cup T) + f(S \cap T) \leq f(S) + f(T)$.} We denote by $F_{\w,k}$ and $f^+_{\w,k}$ the multi-linear extension and the concave closure of $f_{\w,k}$, respectively. 

The {\em correlation gap} \cite{ADSY10} of a set function is defined as $\sup_{\y \in [0,1]^n} \frac{f^+(\y)}{F(\y)}$. 
The following result due to \cite{yan11} provides an upper bound on the correlation gap of $f_{\w,k}(\cdot)$:

\begin{lemma}[Correlation gap \cite{yan11}] \label{lem:correlationgap}
Let $f_{\w,k}: 2^{[n]} \rightarrow [0, \infty)$ be the {\em weighted rank function} of the rank-$k$ uniform matroid. Then, for any vector $\y \in [0,1]^n$ we have
$$
f^+_{\w,k}(\y) \geq F_{\w,k}(\y) \geq {\left(1 - \frac{k^k}{e^k k!} \right)} \cdot f^+_{\w,k}(\y) \approx {\left(1 - \frac{1}{\sqrt{2\pi k}} \right)} \cdot f^+_{\w,k}(\y).
$$
\end{lemma}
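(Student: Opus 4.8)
The first inequality $f^+_{\w,k}(\y) \geq F_{\w,k}(\y)$ is immediate from the definitions: the multi-linear extension equals $\Ex{S \sim \I(\y)}{f_{\w,k}(S)}$, and the independent coupling $\I(\y)$ is one particular distribution with marginals $\y$, whereas $f^+_{\w,k}(\y)$ is the supremum of $\Ex{S \sim \D(\y)}{f_{\w,k}(S)}$ over all such distributions. So the entire content is the second inequality, i.e. lower-bounding the correlation gap by $\beta := 1 - \frac{k^k}{e^k k!}$.

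The plan is to decompose the weighted problem into unweighted ones by thresholding the weights. Writing $A_t = \{i : w_i > t\}$ and $g_t(S) = \min(k, |S \cap A_t|)$, a layer-cake identity gives $f_{\w,k}(S) = \int_0^\infty g_t(S)\, dt$ for every $S$ (the sum of the top-$k$ weights in $S$ equals the integral over thresholds of the truncated count). Two elementary facts make this useful. First, the multi-linear extension is additive, so $F_{\w,k}(\y) = \int_0^\infty G_t(\y)\, dt$, where $G_t$ is the multi-linear extension of $g_t$. Second, the concave closure is subadditive under such combinations, $f^+_{\w,k}(\y) \leq \int_0^\infty g_t^+(\y)\, dt$, because any fixed distribution with marginals $\y$ has value $\int_0^\infty \Ex{S \sim \D(\y)}{g_t(S)}\, dt \leq \int_0^\infty g_t^+(\y)\, dt$, and taking the supremum preserves the bound. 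Hence it suffices to prove the per-layer bound $G_t(\y) \geq \beta\, g_t^+(\y)$ for every $t$; integrating and combining the two facts then gives $F_{\w,k}(\y) = \int_0^\infty G_t(\y)\,dt \geq \beta \int_0^\infty g_t^+(\y)\,dt \geq \beta\, f^+_{\w,k}(\y)$.

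Each layer $g_t$ is the rank function of a rank-$k$ uniform matroid on ground set $A_t$, so the problem reduces to a scalar inequality. Its concave closure is the capped sum $g_t^+(\y) = \min(k, \sum_{i \in A_t} y_i)$ (for fixed mean $\sum_{i \in A_t} y_i$, Jensen bounds any coupling by $\min(k,\cdot)$, and a dependent rounding keeping $|S \cap A_t|$ within one of its mean attains it), while $G_t(\y) = \Ex{S \sim \I(\y)}{\min(k, |S \cap A_t|)}$. Writing $\mu = \sum_{i \in A_t} y_i$, the target becomes $\E{\min(k, \sum_i X_i)} \geq \beta \min(k, \mu)$ for independent Bernoulli variables $X_i$ of total mean $\mu$.

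The main obstacle is this last inequality, and the plan is to pin down its extremal configuration. I would first show that splitting any $\mathrm{Bernoulli}(p)$ into two independent copies of $\mathrm{Bernoulli}(p/2)$ (preserving the total mean) can only decrease $\E{\min(k, \sum_i X_i)}$; conditioning on the remaining variables, this is a one-line consequence of the concavity of $m \mapsto \min(k, m)$, namely $2\phi(1) - \phi(0) - \phi(2) \geq 0$. Iterating the splitting drives all Bernoulli parameters to zero at fixed mean $\mu$, so by Poisson convergence $\E{\min(k, \sum_i X_i)} \geq \E{\min(k, Z)}$ with $Z \sim \mathrm{Poisson}(\mu)$. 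It then remains to minimize $\E{\min(k, Z)} / \min(k, \mu)$ over $\mu$, for which I would use the tail-sum identity $\E{\min(k, Z)} = \sum_{j=1}^k \Pro{Z \geq j}$: stochastic monotonicity of the Poisson family settles the range $\mu \geq k$, and a short calculus argument shows the ratio is decreasing on $\mu \leq k$, so the minimum occurs at $\mu = k$. Finally, at $\mu = k$ a telescoping computation gives $\E{\min(k, Z)} = k - \frac{k^k e^{-k}}{(k-1)!}$, whence the ratio equals exactly $1 - \frac{k^k}{e^k k!} = \beta$, and Stirling's approximation $k! \approx \sqrt{2\pi k}\,(k/e)^k$ yields the stated $\approx 1 - \frac{1}{\sqrt{2\pi k}}$.
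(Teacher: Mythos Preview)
The paper does not prove this lemma; it is quoted verbatim as a known result from \cite{yan11} and used as a black box in the analysis of \Cref{algo}. Your proposal therefore cannot be compared to a ``paper's proof'' --- there is none --- but it is nonetheless a correct and essentially complete self-contained argument, and it follows the same strategy as the original reference: reduce the weighted case to the unweighted one via the threshold (layer-cake) decomposition, identify $g_t^+(\y) = \min(k,\sum_{i\in A_t} y_i)$, Poissonize by repeated Bernoulli splitting (which only lowers $\E{\min(k,\cdot)}$ by concavity), and then optimize the Poisson ratio over $\mu$. The one step you label ``a short calculus argument'' is indeed short: with $h(\mu)=\E{\min(k,Z_\mu)}$ one has $h'(\mu)=\Pro{Z_\mu\le k-1}$ and $h''(\mu)=-\Pro{Z_\mu=k-1}<0$, so $h$ is concave with $h(0)=0$, whence $h(\mu)/\mu$ is decreasing on $(0,k]$; combined with your monotonicity observation for $\mu\ge k$, this pins the minimum at $\mu=k$ as claimed. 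The final evaluation $\E{\min(k,Z_k)}=k-\frac{k^k e^{-k}}{(k-1)!}$ is equivalent to $\E{(Z_k-k)^+}=k\,\Pro{Z_k=k}$, which holds by a short telescoping computation.
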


\section{Structural Properties of a Natural LP Relaxation} \label{sec:relaxation}
We consider the following LP relaxation which applies to any planning instance of $k$-\RS:
\begin{align}
\underset{{\x \succeq \bf 0}}{\textbf{maximize: }}&~ \sum_{i \in \A} \sum_{\tau \in \mathbb{N}} p_i(\tau) \cdot x_{i,\tau}  \label{lp:LP} \tag{\textbf{LP}$_k$}\\
\textbf{s.t.: }& \sum_{i \in \A} \sum_{\tau \in \mathbb{N}} x_{i,\tau} \leq k, \label{lp:total}\tag{C.1}\\
&~\sum_{\tau \in \mathbb{N}} \tau \cdot x_{i,\tau} \leq 1, \forall i \in \A. \label{lp:arm} \tag{C.2}
\end{align}
In the above formulation, each variable $x_{i,\tau}$ denotes the fraction of time where arm $i$ is played under delay $\tau$ -- namely, exactly $\tau$ time steps after it was played for the last time. In \eqref{lp:LP}, constraint \eqref{lp:total} follows by the fact that the total fraction of time where any action can be played (under any delay) cannot be more than $k$. Further, constraints \eqref{lp:arm} are valid constraints for the fractional plays of any individual arm under different delays. 

We start by proving that the optimal solution of \eqref{lp:LP} is asymptotically an upper bound on the optimal average expected payoff: 

\begin{restatable}{lemma}{restateUpperBound} \label{lem:lpupperbound}
For any instance of the $k$-\RS problem, let $V^*$ be the optimal value of \eqref{lp:LP} and $\opt(T)$ be the maximum payoff that can be collected in a time horizon of $T$ rounds. Then $T \cdot V^* \geq \opt(T).$
\end{restatable}

Assuming knowledge of an upper bound $\tau^{\max} \geq \max_{i \in \mathcal{A}} \{\tau^{\max}_i\}$ on the maximum recovery time of any action, an optimal extreme point solution to \eqref{lp:LP} can be computed in polynomial time:

\begin{restatable}{fact}{restateLPsupport}
There exists an optimal solution to \eqref{lp:LP} which is supported in $(x_{i,\tau})_{i \in \A, \tau \in [\tau^{\max}_i]}$. Further, given an upper bound $\tau^{\max}$ on the maximum recovery time of any action, an optimal extreme point solution to \eqref{lp:LP} can be computed in polynomial time.
\end{restatable}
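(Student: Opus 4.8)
The plan is to prove the two assertions in sequence: first the support characterization, which then feeds directly into the polynomial-time computability claim. Both follow from elementary exchange and restriction arguments; the only delicate point is matching the notion of ``extreme point'' across the infinite- and finite-dimensional formulations.

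For the support claim, I would start from an arbitrary optimal solution with coordinates $x^*_{i,\tau}$ and argue by a mass-shifting exchange that any fractional play at a delay exceeding an arm's recovery time can be relocated to the recovery time itself without harming feasibility or the objective. Concretely, fix an arm $i$ and a delay $\tau > \tau^{\max}_i$ with $x^*_{i,\tau} > 0$. Since $p_i$ is constant for delays at least $\tau^{\max}_i$, we have $p_i(\tau) = p_i(\tau^{\max}_i)$, so transferring the entire mass $x^*_{i,\tau}$ to the coordinate $(i,\tau^{\max}_i)$ leaves the objective unchanged. I would then verify feasibility: constraint \eqref{lp:total} is preserved because the total amount of mass is unchanged, while the per-arm budget constraint \eqref{lp:arm} is only relaxed, since $\tau^{\max}_i \cdot x^*_{i,\tau} < \tau \cdot x^*_{i,\tau}$. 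Applying this transfer to every pair $(i,\tau)$ with $\tau > \tau^{\max}_i$ yields an optimal solution supported entirely on $(x_{i,\tau})_{i \in \A, \tau \in [\tau^{\max}_i]}$.

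For the computability claim, I would use the support result to replace the infinite-dimensional program by a finite one. Given the universal bound $\tau^{\max} \geq \max_{i} \tau^{\max}_i$, the existence of an optimal solution supported on delays at most $\tau^{\max}_i \leq \tau^{\max}$ means it suffices to solve the restriction of \eqref{lp:LP} to the variables $x_{i,\tau}$ with $i \in [n]$ and $\tau \in [\tau^{\max}]$, fixing all remaining variables to zero. This restricted program has $n \cdot \tau^{\max}$ variables and $n+1$ constraints beyond non-negativity; by the polynomial recovery-time assumption, $\tau^{\max}$ is polynomially bounded, so the restricted LP has polynomial size and a basic optimal (extreme point) solution can be found in polynomial time by standard linear programming algorithms.

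The one point that requires care, and the main obstacle, is reconciling the notion of extreme point between the finite restricted polyhedron and the feasible region of the original \eqref{lp:LP}. I would argue that padding a vertex of the restricted polyhedron with zeros produces a vertex of the full feasible region: if the padded point were a proper convex combination of two distinct feasible points, then, since all of its padded coordinates sit at their lower bound of zero and every feasible point is non-negative, both points would have to vanish on those coordinates as well, placing them inside the restricted polyhedron and contradicting extremality there. This confirms that the extreme point computed for the finite LP is genuinely an extreme point of \eqref{lp:LP}, completing the argument.
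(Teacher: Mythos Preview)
Your proposal is correct and follows essentially the same approach as the paper: the same mass-shifting exchange from delays $\tau > \tau^{\max}_i$ down to $\tau^{\max}_i$, followed by restriction to the polynomial-size LP. You are in fact slightly more careful than the paper, which simply asserts that an optimal extreme point of the restricted LP suffices without explicitly checking (as you do) that padding with zeros preserves extremality in the original feasible region.
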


Let us introduce the notion of {\em supported} actions:
\begin{definition}[Supported actions] 
For any (potentially infeasible) solution $\x$ to \eqref{lp:LP}, we say that an action $i \in \A$ is {\em supported} in $\x$, denoted by $i \in \supp(\x)$, if there exists $\tau \in \mathbb{N}$ such that $x_{i,\tau} > 0$. 
\end{definition}

We introduce the following class of solutions:

\begin{definition}[Delay-feasible solutions] \label{def:delayfeasible}
We say that a vector $\x = (x_{i,\tau})_{i \in \A, \tau \in \mathbb{N}}$ is {\em delay-feasible}, if for each supported action $i \in \A(\x)$ there exists at most one non-zero variable $x_{i,\tau_i}$, and for this variable it holds $x_{i,\tau_i} = \nicefrac{1}{\tau_i}$. 

Further, we say that $\x$ is {\em almost-delay-feasible}, if it is delay-feasible, possibly with the exception of a single arm $\iota \in \A(\x)$, for which there exist at most two non-zero variables $x_{\iota,\tau_{\iota,a}}, x_{\iota,\tau_{\iota, b}}$, such that $x_{\iota,\tau_{\iota,a}} < \nicefrac{1}{\tau_{\iota,a}}$ and $x_{\iota,\tau_{\iota,b}} < \nicefrac{1}{\tau_{\iota,b}}$. In the case of almost-delay-feasible solutions, we refer to $\iota$ as the {\em irregular} arm.
\end{definition}

We remark that delay-feasible solutions subsume by definition almost-delay-feasible solutions and that a (almost-)delay-feasible solution is not necessarily a feasible solution to \eqref{lp:LP}.

As we prove in the next key-lemma, any extreme point solution of \eqref{lp:LP} is almost-delay-feasible:

\begin{restatable}[Sparsity pattern of extreme point solutions]{lemma}{restateAlmost}
\label{lem:almost}
Let $\x$ be any extreme point solution of \eqref{lp:LP}. Then, $\x$ is almost-delay-feasible. 
\end{restatable}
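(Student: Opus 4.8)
The plan is to derive the sparsity pattern entirely from the standard characterization of extreme points (basic feasible solutions) of a polyhedron: at a vertex $\x$ of \eqref{lp:LP}, the columns of the tight-constraint matrix corresponding to the strictly positive variables must be linearly independent. Equivalently, writing $P$ for the number of variables with $x_{i,\tau} > 0$, the quantity $P$ cannot exceed the number of linearly independent tight \emph{structural} constraints, i.e.\ those among \eqref{lp:total} and the per-arm constraints \eqref{lp:arm} (the active nonnegativity constraints account precisely for the zero variables). Since an optimal extreme point is supported on $\tau \le \tau^{\max}_i$ by the preceding Fact, I may treat \eqref{lp:LP} as finite-dimensional, so this counting is legitimate.

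First I would set up the bookkeeping. For each arm $i$ let $n_i$ be the number of delays $\tau$ with $x_{i,\tau} > 0$, so that $P = \sum_{i} n_i$, and let $T \subseteq \A(\x)$ be the set of supported arms whose constraint \eqref{lp:arm} is tight. The crucial observation is the block structure of the constraint matrix restricted to the support: the columns of a fixed arm $i$ have nonzero entries only in its own row \eqref{lp:arm} (with the distinct coefficients $\tau$) and in the single row \eqref{lp:total} (whose coefficients are all $1$). Hence, dropping the row \eqref{lp:total}, the tight \eqref{lp:arm}-rows form a block-diagonal matrix whose blocks are single nonzero rows with disjoint supports, giving rank exactly $|T|$; restoring \eqref{lp:total} increases the rank by at most one. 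This yields the key inequality $P \le |T| + 1$.

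Next I would convert this rank bound into the claimed pattern. Every supported arm either lies in $T$ or has slack in \eqref{lp:arm}; call the latter set $U$, and note $n_i \ge 1$ for every supported arm. Writing $P = \sum_{i \in T} n_i + \sum_{i \in U} n_i$ and subtracting $|T|$ from $P \le |T| + 1$ gives $\sum_{i \in T}(n_i - 1) + \sum_{i \in U} n_i \le 1$, a sum of nonnegative integers bounded by $1$. Therefore at most one arm can violate the ``single tight variable'' rule: either every supported arm has one positive variable with \eqref{lp:arm} tight (so $\tau_i\, x_{i,\tau_i} = 1$, i.e.\ $x_{i,\tau_i} = 1/\tau_i$), or exactly one irregular arm $\iota$ absorbs the single unit of excess. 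In the latter case $\iota$ has either two positive variables with \eqref{lp:arm} tight, or one positive variable with \eqref{lp:arm} slack; in both subcases each positive coordinate satisfies $\tau\, x_{\iota,\tau} < 1$, hence $x_{\iota,\tau} < 1/\tau$, exactly matching \Cref{def:delayfeasible}.

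The points requiring care are the following. First, justifying finite support so the rank count applies, which I handle via the preceding Fact together with the monotonicity of the payoffs, letting me truncate to $\tau \le \tau^{\max}_i$. Second, making the rank bookkeeping around \eqref{lp:total} precise — in particular the ``$+1$'' for the shared constraint, including the possibility that \eqref{lp:total} is slack, in which case $P \le |T|$ forces $U = \emptyset$ and all $n_i = 1$, giving a fully delay-feasible solution with no irregular arm. Third, confirming that the excess-budget inequality genuinely rules out two irregular arms or an arm with three positive variables, since any such configuration would contribute at least $2$ to a quantity bounded by $1$. I expect the block-structure rank argument to be the main technical step, with the remaining case analysis being routine.
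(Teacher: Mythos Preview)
Your proposal is correct and takes essentially the same approach as the paper: a rank/counting argument at an extreme point followed by a short case analysis. The only difference is organizational—the paper first passes to the sub-LP on supported arms and uses the cruder bound $\|\x'\|_0 \le n'+1$ (total structural constraints) before casing on which of \eqref{lp:total}, \eqref{lp:arm} are tight, whereas you bound directly by the rank of the tight structural constraints via their block structure to obtain $P \le |T|+1$; the resulting case split and conclusion are identical.
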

\begin{proof}
Let $\A(\x) \subseteq \A$ be the set of supported actions in an extreme point solution $\x$ of \eqref{lp:LP}, and let $|\A(\x)| = n'$. Let $\lp'$ be the LP that results by dropping from \eqref{lp:LP} all the variables and constraints that correspond to the non-supported actions in $\A \setminus \A(\x)$. Then, the solution $\x' = (x_{i,\tau})_{i \in \A(\x), \tau \in \mathbb{N}}$ (i.e., the solution $\x$ restricted to the arms of $\A(\x)$) is an extreme point solution of this reduced LP. In order to see that, let us assume that $\x'$ is not an extreme point and, hence, $\x' = \lambda \x'_1 + (1 - \lambda) \x'_2$ for some $\lambda \in (0,1)$ and $\x'_1,\x'_2$ feasible solutions to $\lp'$. Then, by padding $\x'_1$ and $\x'_2$ with zero variables for any $i \in \A \setminus \A(\x)$ and $\tau \in \mathbb{N}$, the resulting vectors, let $\x_1$ and $\x_2$, respectively, are feasible solutions to \eqref{lp:LP}. However, it also holds that $\x = \lambda \x_1 + (1 - \lambda) \x_2$, contradicting the fact that $\x$ is an extreme point of \eqref{lp:LP}. 

Now let us focus on the reduced formulation $\lp'$. By construction, since the actions in $\A(\x')$ are all supported in $\x'$, it has to be that $\|\x'\|_0 \geq n'$. Let $u$ be the number of variables of $\lp'$. By counting constraint \eqref{lp:total}, constraints \eqref{lp:arm} (one for each $i \in \A(\x')$), and $u$ non-negativity constraints, we can see that the number of constraints in $\lp'$ is $u + n' + 1$. Hence, since $\x'$ is an extreme point solution of $\lp'$, it must contain at most $n'+1$ non-zero variables (since at least $u-n'-1$ of the inequalities which are tight in $\x'$ must correspond to non-negativity constraints). This gives $\|\x'\|_0 \leq n'+1$ and, thus, we can conclude that $\|\x'\|_0 \in \{n', n'+1\}$. 

We now distinguish between two cases: 

{(a)} If $\|\x'\|_0 = n'$, then since every arm in the solution $\x'$ is supported and $\A(\x') = n'$, by pigeonhole principle each arm $i \in \A(\x')$ has to be supported by a unique non-zero variable $x'_{i,\tau_i} > 0$ for some $\tau_i \in \mathbb{N}$. Further, since exactly $n'$ out of the $u$ non-negativity constraints are not tight, then $n'$ out of the $n'+1$ inequalities in \eqref{lp:total} and \eqref{lp:arm} have to be met with equality. (i) In the case where these $n'$ constraints come from \eqref{lp:arm}, then for each $i \in \A(\x')$, there exists a unique delay $\tau_i$ and variable $x'_{i,\tau_i}$, such that $x'_{i,\tau_i} = \nicefrac{1}{\tau_i}$, implying that $\x'$ and, thus, $\x$ is delay-feasible. 
(ii) In the case where only $n'-1$ from the tight constraints in \eqref{lp:total} and \eqref{lp:arm} come from \eqref{lp:arm}, then for each corresponding arm $i \in \A(\x')$, there exists a unique delay $\tau_i$ and variable $x'_{i,\tau_i}$, such that $x'_{i,\tau_i} = \nicefrac{1}{\tau_i}$. Therefore, the arm $\iota \in \A(\x')$ which is associated with the non-tight constraint in \eqref{lp:arm} has to be an arm which is supported by a single variable $0 < x_{\iota,\tau_{\iota}} < \nicefrac{1}{\tau_{\iota}}$. In this case, $\x'$ and, thus, $\x$ is almost-delay-feasible, and $\iota$ corresponds to the irregular arm.

{(b)} In the case where $\|\x'\|_0 = n' + 1$, by a similar argument as above, each arm $i \in \A(\x')$ has to be supported by a unique non-zero variable $x'_{i,\tau_i} > 0$ for some $\tau_i \in \mathbb{N}$, with the exception of a single arm $\iota \in \A(\x')$, which is supported by two non-zero variables $x_{\iota,\tau_{\iota,a}}, x_{\iota,\tau_{\iota,b}} > 0$. Further, since $\|\x'\|_0 = n' + 1$, it has to be that all the constraints in \eqref{lp:total} and \eqref{lp:arm} are met with equality. Therefore, for any arm $i \in \A(\x') \setminus \{\iota\}$, there exists a unique delay $\tau_i$ and variable $x'_{i,\tau_i}$, such that $x'_{i,\tau_i} = \nicefrac{1}{\tau_i}$. Again, the solution $\x$ of \eqref{lp:LP} in this case is almost-delay-feasible, and $\iota$ corresponds to the irregular arm. 

Since delay-feasible solutions subsume almost-delay-feasible by definition, it follows that, in any case, $\x$ is almost-delay-feasible.
\end{proof}

\section{Improved Approximation Guarantees for Planning} \label{sec:algorithm}
For any instance of $k$-\RS, our algorithm (see \Cref{algo} below), called ``Randomize-Then-Interleave'', constructs a feasible planning schedule without requiring knowledge of the time horizon. We remark that for the planning setting of the problem, we treat the payoff of any arm under any possible delay as deterministic. 

The algorithm starts from computing an optimal extreme point solution $\x^*$ to \eqref{lp:LP}, and then uses this solution to determine a {\em critical} delay $\tau^*_i$ for each supported arm $i \in \supp(\x^*)$. By \Cref{lem:almost}, we know that $\x^*$ is almost-delay-feasible, which means that -- possibly with the exception of a single {irregular} arm $\iota$ -- for every arm $i \in \supp(\x^*) \setminus \{\iota\}$, there exists a unique $\tau_i$, such that $x^*_{i,\tau_i} = \nicefrac{1}{\tau_i} > 0$. Thus, \Cref{algo} sets this unique $\tau_i$ to be the critical delay of each arm $i \in \supp(\x^*) \setminus \{\iota\}$. In the case where an irregular arm $\iota \in \supp(\x^*)$ exists, then, by \Cref{def:delayfeasible}, there exist at most two distinct $\tau_{\iota,a}, \tau_{\iota, b}$ with $x^*_{\iota,\tau_{\iota,a}} > 0$ and $x^*_{\iota,\tau_{\iota,b}} \geq 0$, in which case the critical delay $\tau^*_{\iota}$ of $\iota$ is set to $\tau_{\iota,a}$ or $\tau_{\iota,b}$, with marginal probability $\tau_{\iota,a} x^*_{\iota,\tau_{\iota,a}}$ and $\tau_{\iota,b} x^*_{\iota,\tau_{\iota,b}}$, respectively. Notice that by constraints \eqref{lp:arm} of \eqref{lp:LP}, the above sampling process is well-defined, since $\tau_{\iota,a} x^*_{\iota,\tau_{\iota,a}} + \tau_{\iota,b} x^*_{\iota,\tau_{\iota,b}} \leq 1$. In the case where no $\tau^*_{\iota}$ is sampled, the irregular arm is removed from $\supp(\x^*)$.  After computing the critical delays, for each arm $i \in \supp(\x^*)$ the algorithm draws a random {\em offset} $r_i$ independently and uniformly from $\{0,1,\ldots, \tau^*_i -1 \}$.

\begin{algorithm2e} \label{algo}
\DontPrintSemicolon
\caption{Randomize-Then-Interleave}
{\color{blue}\tcc{Initialization phase}}
    Compute an optimal extreme point solution $\x^*$ to \eqref{lp:LP}.\;
    Let $\iota \in \A$ be the irregular arm (if such an arm exists).\;
    \For{{each arm} $i \in \supp(\x^*)\setminus \{\iota\}$}{
        Let $\tau^*_i$ be the unique $\tau \in \mathbb{N}$ which satisfies $x^*_{i, \tau} = \nicefrac{1}{\tau}$.\;}
    \If{an irregular arm $\iota \in \supp(\x^*)$ exists}
    {  
            Let $\tau_{\iota, a}, \tau_{\iota, b} \in \mathbb{N}$ be such that $x^*_{\iota, \tau_{\iota, a}} > 0$, $x^*_{\iota, \tau_{\iota, b}} \geq 0$, and $\tau_{\iota, a} \neq \tau_{\iota, b}$. \;
            Set $p_a \gets \tau_{\iota,a}  x^*_{\iota,\tau_{\iota,a}}$ and $p_b \gets \tau_{\iota,b}  x^*_{\iota,\tau_{\iota,b}}$. \;
            Sample $\tau^*_{\iota}$ from $\{\tau_{\iota,a},\tau_{\iota,b},\infty\}$ with marginals $p_a$, $p_b$, and $1 - p_a - p_b$, respectively. \;
            \If{$\tau^*_{\iota} = \infty$}{
            Remove $\iota$ from $\A(\x^*)$. \;
            }
    }
    \For{{each arm} $i \in \supp(\x^*)$}{
    Sample an {\em offset} $r_i$ uniformly at random from $\{0,1,\ldots, \tau^*_i -1 \}$. \;
    }
    {\color{blue}\tcc{Online phase}}
    \For{$t = 1,2, \ldots$}{
    Let $C_t \subseteq \A$ be the subset of {\em candidate} arms, defined as $C_t = \left\{i \in \supp(\x^*) \mid t~\textbf{mod}~\tau^*_i \equiv r_i\right\}$.\;
    Play the maximum-payoff subset of $k$ arms in $C_t$, namely, $A_t = \underset{S \subseteq C_t, |S| \leq k}{\text{argmax}} \sum_{i \in S} p_i(\sigma_{i,t}),$ where $\sigma_{i,t}$ ({\em actual delay}) is the time passed since arm $i$ was last played before $t$. \;
}
\end{algorithm2e}

After initialization, \Cref{algo} proceeds to its online phase where, at each round $t$, it first computes a subset of {\em candidate} arms $C_t$, namely, all arms $i \in \supp(\x^*)$ which satisfy $t~\textbf{mod}~\tau^*_i \equiv r_i$. Then, the algorithm computes the maximum-payoff subset $A_t$ of at most $k$ candidate arms, computed using their actual delays (the number of rounds passed since last play), and plays these arms.

\subsection{Approximation Analysis: Assuming Delay-Feasible LP Solutions} \label{sec:easy}
We now present an analysis of the approximation guarantee of \Cref{algo} for general $k$. In order to facilitate the presentation, we first focus on a special class of ``easy'' instances, where the optimal extreme point solution $\x^*$ of \eqref{lp:LP} is delay-feasible (i.e., there is no irregular arm). Consequently, we show how this assumption can be relaxed at the expense of additional technical effort. 

Recall that we denote by $\supp(\x^*)$ the set of arms that are supported in the solution $\x^*$. Notice that, since $\x^*$ is delay-feasible by assumption, the only source of randomness in \Cref{algo} is due to the random offsets. 
Since each offset $r_i$ is sampled independently and uniformly from $\{0,1, \ldots, \tau^*_{i} - 1\}$ and $i \in C_t$ if and only if $t~\textbf{mod}~\tau^*_i \equiv r_i$, the following fact is immediate:

\begin{fact} \label{fact:probCandidate}
Let $\{\tau^*_i\}_{i \in \supp(\x^*)}$ be the critical delays of the supported arms. At any round $t$, each arm $i \in \A(\x^*)$ belongs to $C_t$ independently with probability $\nicefrac{1}{\tau^*_i}$.
\end{fact}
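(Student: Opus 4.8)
The plan is to observe that, under the delay-feasibility assumption in force here, the critical delays $\{\tau^*_i\}_{i \in \supp(\x^*)}$ are fixed deterministic quantities (no randomness enters through an irregular arm), so that the \emph{only} source of randomness in the definition of $C_t$ is the collection of independently sampled offsets $\{r_i\}$. The key structural point I would emphasize is that, for each fixed arm $i$ and round $t$, the indicator event $\{i \in C_t\}$ is a function of the single offset $r_i$ and of nothing else.

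First I would fix an arm $i \in \supp(\x^*)$ and a round $t$, and recall that by the definition of the candidate set we have $i \in C_t$ if and only if $t \bmod \tau^*_i \equiv r_i$. Since $t \bmod \tau^*_i$ is a fixed element of $\{0, 1, \ldots, \tau^*_i - 1\}$ and $r_i$ is drawn uniformly from this set of exactly $\tau^*_i$ elements, the marginal follows by a one-line computation:
$$
\Pro{i \in C_t} = \Pro{r_i = t \bmod \tau^*_i} = \frac{1}{\tau^*_i}.
$$

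For the independence claim, I would note that the events $\{i \in C_t\}_{i \in \supp(\x^*)}$ are each measurable with respect to a \emph{distinct} offset $r_i$, and the offsets are sampled mutually independently in the initialization phase of \Cref{algo}. Since functions of disjoint collections of independent random variables are themselves independent, the membership events are mutually independent, and their joint law factorizes into the product of the marginals computed above.

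I do not anticipate a genuine obstacle: the statement is labeled a \textbf{Fact} precisely because it is immediate once the two ingredients are isolated — a uniform marginal per offset and mutual independence across offsets. The single point meriting a line of care is verifying that $t \bmod \tau^*_i$ truly lands in the support $\{0, 1, \ldots, \tau^*_i - 1\}$ of $r_i$, so that the matching probability is exactly $\nicefrac{1}{\tau^*_i}$ rather than $0$; this holds by the definition of the modulo operation.
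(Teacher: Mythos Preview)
Your proposal is correct and matches the paper's approach exactly: the paper states the fact as immediate from the uniform independent sampling of the offsets and the definition of $C_t$, and your write-up simply spells out those two ingredients (the uniform marginal and the independence across distinct offsets) in full.
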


Let us focus on the expected payoff collected by \Cref{algo} at any round $t \geq \tau^{\max}$. By our simplifying assumption that $\x^*$ is delay-feasible, for each supported arm $i \in \A(\x^*)$, there is a unique critical delay $\tau^*_i$, such that $x^*_{i,\tau^*_i} = \nicefrac{1}{\tau^*_i}$ is the only non-zero variable of $\x^*$ corresponding to $i$.

We construct a vector $\y = [0,1]^{n}$, such that $y_i = \nicefrac{1}{\tau^*_{i}}$ for each $i \in \A(\x^*)$, and $y_i = 0$, otherwise. Further, let us define a vector $\w \in [0, \infty)^n$ such that $w_i = p_i(\tau^*_{i})$ for each $i \in \A(\x^*)$, and $w_i = 0$, otherwise. Recall that for any fixed round $t \geq \tau^{\max}$, by definition of $C_t$ and the construction of the offsets, the events $\{i \in C_t\}_{i \in \A}$ are independent. Hence, we can think of $C_t$ as a random set, where each arm $i \in \A$ belongs to $C_t$ independently with probability $y_i$, namely, $C_t \sim \I(\y)$. 

Let $\underline{A}_t = \underset{S \subseteq C_t, |S| \leq k}{\text{argmax}} \sum_{i \in S} p_i(\tau^*_i)$ be a set of at most $k$ arms in $C_t$ which maximizes the total payoff measured with respect to the critical delays. By construction of $C_t$, the actual delay $\sigma_{i,t}$ of each $i \in \A(\x^*)$ is (deterministically) lower bounded by $\tau^*_i$, since any arm must be a candidate in order to be played (which happens exactly every $\tau^*_i$ rounds). Thus, for the expected collected payoff of any round $t \geq \tau^{\max}$ we have
\begin{align}
    \Ex{C_t \sim \I(\y)}{\max_{S \subseteq C_t, |S| \leq k}\sum_{i \in S} p_i(\sigma_{i,t})} \geq \Ex{C_t \sim \I(\y)}{\max_{S \subseteq C_t, |S| \leq k}\sum_{i \in S} p_i(\tau^*_i)} = \Ex{C_t \sim \I(\y)}{\w(\underline{A}_t)}. \label{eq:analysisActual}
\end{align}

Using \Cref{eq:uniform} in combination with \Cref{def:multilinear,def:concaveclosure}, we get
\begin{align}
    \Ex{C_t \sim \I(\y)}{\w(\underline{A}_t)} = \Ex{C_t \sim \I(\y)}{f_{\w,k}(C_t)} = F_{\w,k}(\y) \geq {\left(1 - \frac{k^k}{e^k k!} \right)} \cdot f^+_{\w,k}(\y), \label{eq:analysisOne}
\end{align}
where the last inequality follows by \Cref{lem:correlationgap}.

The following technical lemma relates $f^+_{\w,k}(\y)$ with optimal value of \eqref{lp:LP}:

\begin{restatable}{lemma}{restateConcaveupperbound} \label{lem:concaveupperbound}
For any vectors $\w \in [0,\infty)^n$ and $\y \in [0,1]^n$, let $f^+_{\w,k}(\y)$ be the concave closure of the weighted rank function of the rank-$k$ uniform matroid with weights $\w$, evaluated at $\y$. Then, if $\y$ satisfies $\| \y \|_1 \leq k$, it is the case that 
$f^+_{\w,k}(\y) \geq \sum_{i \in [n]} w_i y_i$.
\end{restatable}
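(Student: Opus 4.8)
The plan is to exploit the variational definition of the concave closure (\Cref{def:concaveclosure}) as a supremum over distributions $\D(\y)$ with marginals $\y$: since $f^+_{\w,k}(\y)$ is such a supremum, it suffices to exhibit a \emph{single} feasible distribution whose expected value equals $\sum_{i \in [n]} w_i y_i$. The natural candidate is one supported entirely on sets of size at most $k$. The reason this helps is that on such sets the rank function degenerates to the plain weighted sum: by non-negativity of $\w$, whenever $|S| \le k$ the maximizing subset $I$ in \eqref{eq:uniform} is $S$ itself, so $f_{\w,k}(S) = \w(S) = \sum_{i \in S} w_i$. Hence, if every set in the support has size $\le k$, linearity of expectation together with the marginal condition forces the expected value of $f_{\w,k}$ to be exactly $\sum_{i \in [n]} w_i y_i$, and the lemma follows immediately.

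The crux therefore reduces to a convex-geometry statement: under the hypotheses $\|\y\|_1 \le k$ and $\y \in [0,1]^n$, the point $\y$ lies in the convex hull of the indicator vectors $\bm{1}_S$ with $|S| \le k$. First I would observe that the feasible region $P = \{\z \in [0,1]^n : \sum_{i} z_i \le k\}$ is exactly the independent-set polytope of the rank-$k$ uniform matroid. The key structural fact is that $P$ is an \emph{integral} polytope whose vertices are $0/1$ vectors with at most $k$ ones. To verify this, at any vertex either every tight constraint is a box constraint $z_i \in \{0,1\}$ (yielding a $0/1$ point directly), or the constraint $\sum_i z_i = k$ is tight together with $n-1$ box constraints; in the latter case $n-1$ coordinates are integral, and since $k \in \mathbb{N}$ the remaining coordinate is forced to be integral as well, hence lies in $\{0,1\}$. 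In either case the vertex is the indicator of a set of size at most $k$.

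Given this, Carath\'eodory's theorem (equivalently, the vertex representation of $P$) supplies a convex combination $\y = \sum_S \alpha_S \bm{1}_S$ with $\alpha_S \ge 0$, $\sum_S \alpha_S = 1$, and every $S$ in the support satisfying $|S| \le k$. This $\boldsymbol\alpha$ is precisely a feasible point of the optimization defining $f^+_{\w,k}(\y)$, encoding a distribution $\D(\y)$ with the correct marginals. Plugging it in and exchanging the order of summation gives $\sum_S \alpha_S f_{\w,k}(S) = \sum_S \alpha_S \w(S) = \sum_{i \in [n]} w_i \sum_{S \ni i} \alpha_S = \sum_{i \in [n]} w_i y_i$, where the last equality uses $\sum_{S \ni i}\alpha_S = y_i$. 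Since $f^+_{\w,k}(\y)$ dominates the objective value of any feasible $\boldsymbol\alpha$, we conclude $f^+_{\w,k}(\y) \ge \sum_{i \in [n]} w_i y_i$.

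I expect the only delicate point to be the integrality of the vertices of $P$, specifically the argument that when $\sum_i z_i = k$ is tight the one free coordinate is forced to be integral; this hinges on $k$ being a positive integer and can alternatively be supplied by a short inductive rounding of $\y$ into $k$-sized sets, or by invoking total unimodularity of the constraint system, and I would keep whichever exposition is shortest. Everything else is routine bookkeeping with the definition of the concave closure.
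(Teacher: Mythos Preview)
Your proposal is correct and follows essentially the same route as the paper: both arguments show that the polytope $[0,1]^n \cap \{\z : \sum_i z_i \le k\}$ has integral vertices (indicators of sets of size at most $k$), write $\y$ as a convex combination of such vertices to obtain a feasible distribution in the definition of $f^+_{\w,k}$, and then use $f_{\w,k}(S)=\w(S)$ for $|S|\le k$ together with the marginal condition to compute the expectation as $\sum_i w_i y_i$. If anything, you supply more detail than the paper, which simply asserts the vertex characterization as ``easy to verify.''
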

\begin{proof}
Let us consider the intersection of the $n$-dimensional hypercube $[0,1]^n$ with the halfspace $H = \{\z \in \mathbb{R}^n \mid \sum_{i \in [n]} z_i \leq k\}$. It is easy to verify that every vertex $\bf v$ of the resulting polytope lies in $\{0,1\}^n$ and satisfies $\|{\bf v}\|_0 \leq k$. 
Then, since $\y \in [0,1]^n \cap H$, by standard arguments in polyhedral combinatorics, $\y$ can be expressed as a convex combination of characteristic vectors of vertices of $[0,1]^n \cap H$, each corresponding to a subset of $[n]$ of cardinality at most $k$. This convex combination induces a probability distribution $\D'(\y)$ over subsets of cardinality at most $k$ with marginals $\y$. Therefore, by definition of $f^+_{\w,k}$, we have
\begin{align*}
    f^+_{\w,k}(\y) = \sup_{\D(\y)} \Ex{S \sim \D(\y)}{f_{\w,k}(S)} \geq \Ex{S \sim \D'(\y)}{f_{\w,k}(S)} = \Ex{S \sim \D'(\y)}{\w(S)},
\end{align*}
where the last equality follows by the fact that $\D'(\y)$ is only supported by sets of cardinality at most $k$. 

Finally, using the fact that $\D'(\y)$ respects the marginals $\y$, we can conclude
\begin{align*}
    f^+_{\w,k}(\y) \geq \Ex{S \sim \D'(\y)}{\w(S)} = \sum_{i \in [n]} w_i \Prob{S \sim \D'(\y)}{i \in S} = \sum_{i \in [n]} w_i  y_i.
\end{align*}
\end{proof}

Let $V^*$ be the optimal value of \eqref{lp:LP}. Since $\| \y \|_1 \leq k$, then by applying \Cref{lem:concaveupperbound} with vectors $\w$ and $\y$ as constructed above, we have that  
\begin{align} 
    f^+_{\w,k}(\y) \geq \sum_{i \in [n]} w_i \cdot y_i = \sum_{i \in \A(\x^*)} p_i(\tau^*_{i}) \cdot x^*_{i, \tau^*_i} = V^*.  \label{eq:analysisTwo}
\end{align}

By combining \Cref{eq:analysisActual,eq:analysisOne,eq:analysisTwo} with \Cref{lem:lpupperbound}, it follows that, for any round $t \geq \tau^{\max}$, the expected payoff collected by \Cref{algo} is at least ${\left(1 - \nicefrac{k^k}{e^k k!} \right)}$-times the average optimal payoff. By using linearity of expectations together with the fact that all payoffs are bounded in $[0,1]$, we can prove a long-run ${\left(1 - \nicefrac{k^k}{e^k k!} \right)}$-approximation guarantee (in expectation) for instances where $\x^*$ is delay-feasible. 

In the rest of this section, we show how the above analysis can be extended for the general case where $\x^*$ is almost-delay-feasible (i.e., an irregular arm exists).


\subsection{Approximation Analysis: Handling the Irregular Arm} \label{sec:irregular}

We now extend the analysis of \Cref{sec:easy} and relax the assumption that the optimal extreme point solution of \eqref{lp:LP} is delay-feasible. 

Let $\x^*$ be an optimal solution to \eqref{lp:LP} and let $\iota \in \A(\x^*)$ be the irregular arm with $\tau^*_{\iota,a}$ and $\tau^*_{\iota,b}$ being the two potential critical delays such that $x^*_{\iota,\tau_{\iota,a}}, x^*_{\iota,\tau_{\iota,b}} \geq 0$ (with at least one of $x^*_{\iota,\tau_{\iota,a}}$ and $ x^*_{\iota,\tau_{\iota,b}}$ being strictly positive, by \Cref{def:delayfeasible}). Instead of conditioning on the sampled critical delay of the irregular arm $\iota$, we construct two (fictitious) parallel copies $\iota_a$ and $\iota_b$ -- one for each possible realization (and associated payoff). Let us define $\A' = \supp(\x^*) \setminus \{\iota\} \cup \{\iota_a, \iota_b\}$ to be the set of supported arms of $\x^*$, where we replace the irregular arm $\iota$ with the two parallel copies $\iota_a$ and $\iota_b$. 

In the next lemma, we characterize the marginal probability that any arm in $\A'$ is added to the set of candidate arms at any round. Notice that, as opposed to \Cref{fact:probCandidate}, the randomness in this case is taken over both the sampling of a critical delay for the irregular arm as well as the random offsets. 

\begin{restatable}{lemma}{restateProbability} \label{lem:probability}
For any arm $i\in \A'$ and at any round $t$, we have that  $\Pro{i \in C_t \text{ and } \tau^*_i = \tau} = x^*_{i,\tau}$.
\end{restatable}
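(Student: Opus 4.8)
The plan is to compute the joint probability $\Pro{i \in C_t \text{ and } \tau^*_i = \tau}$ directly, splitting into cases according to whether $i$ is a regular arm or one of the two fictitious copies of the irregular arm. In each case I would factor the joint event into the sampling of the critical delay followed by the draw of the offset, and read off the answer from the sampling rule of \Cref{algo} together with \Cref{fact:probCandidate}.

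First I would dispose of the regular arms $i \in \supp(\x^*) \setminus \{\iota\}$. For such an arm the critical delay $\tau^*_i$ is deterministic: it is the unique $\tau_i$ with $x^*_{i,\tau_i} = \nicefrac{1}{\tau_i}$ guaranteed by \Cref{lem:almost}. Hence the event $\{\tau^*_i = \tau\}$ has probability $1$ when $\tau = \tau_i$ and $0$ otherwise, so the joint probability collapses to $\Pro{i \in C_t}$ exactly when $\tau = \tau_i$. By \Cref{fact:probCandidate} this equals $\nicefrac{1}{\tau_i} = x^*_{i,\tau_i}$, while for every $\tau \neq \tau_i$ both the claimed probability and $x^*_{i,\tau}$ vanish, settling the claim for regular arms.

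Next I would treat the copies $\iota_a$ and $\iota_b$, adopting the convention that $\iota_a$ carries only the variable $x^*_{\iota_a,\tau_{\iota,a}} := x^*_{\iota,\tau_{\iota,a}}$ (and is zero for all other delays), and symmetrically for $\iota_b$. By construction, $\iota_a$ enters $C_t$ precisely when the sampling step returns $\tau^*_{\iota} = \tau_{\iota,a}$ \emph{and} the offset $r_{\iota}$, drawn uniformly from $\{0,\ldots,\tau_{\iota,a}-1\}$ after the delay is fixed, satisfies $t~\textbf{mod}~\tau_{\iota,a} \equiv r_{\iota}$. Conditioning on the sampled delay therefore gives
\begin{align*}
\Pro{\iota_a \in C_t \text{ and } \tau^*_{\iota_a} = \tau_{\iota,a}} = \Pro{\tau^*_{\iota} = \tau_{\iota,a}} \cdot \Pro{t~\textbf{mod}~\tau_{\iota,a} \equiv r_{\iota} \mid \tau^*_{\iota} = \tau_{\iota,a}} = p_a \cdot \frac{1}{\tau_{\iota,a}},
\end{align*}
where the first factor equals $p_a = \tau_{\iota,a}\, x^*_{\iota,\tau_{\iota,a}}$ by the sampling rule, and the second equals $\nicefrac{1}{\tau_{\iota,a}}$ by the conditional uniformity of the offset. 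The product is exactly $x^*_{\iota,\tau_{\iota,a}} = x^*_{\iota_a,\tau_{\iota,a}}$, and for $\tau \neq \tau_{\iota,a}$ both sides are $0$; the argument for $\iota_b$ is identical with $b$ in place of $a$.

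The one genuinely delicate point is that, for the irregular arm, the offset is drawn \emph{after} its critical delay has been sampled, so the two sources of randomness are coupled rather than independent (this is exactly the coupling that \Cref{fact:probCandidate} sidesteps by assuming a fixed deterministic delay). I expect this to be the main thing to get right, and the resolution is precisely the conditioning above: once we condition on $\tau^*_{\iota} = \tau_{\iota,a}$, the offset is uniform on $\{0,\ldots,\tau_{\iota,a}-1\}$, which makes the candidate probability $\nicefrac{1}{\tau_{\iota,a}}$ and lets the factor $\tau_{\iota,a}$ appearing in $p_a$ cancel, leaving the desired LP variable. Note that the lemma concerns only marginal probabilities, so no independence among the copies is needed here; the (mutual exclusivity of the copies and its effect on payoff) is handled separately in the subsequent argument.
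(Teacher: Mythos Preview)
Your proposal is correct and follows essentially the same approach as the paper: factor the joint event as $\Pro{\tau^*_i=\tau}\cdot\Pro{i\in C_t\mid\tau^*_i=\tau}$, observe the first factor equals $\tau\cdot x^*_{i,\tau}$ by the sampling rule and the second equals $\nicefrac{1}{\tau}$ by the uniform offset, and multiply. The only cosmetic difference is that the paper treats regular and irregular arms uniformly via the single identity $\Pro{\tau^*_i=\tau}=\tau\cdot x^*_{i,\tau}$ (which degenerates to $1$ for regular arms), whereas you split into explicit cases; your ``delicate point'' about the offset being drawn after the delay is handled exactly as the paper does, by conditioning.
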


Notice that, for the parallel copies of the irregular arm, the events $\{\iota_a \in C_t\}$ and $\{\iota_b \in C_t\}$ are now dependent and, in particular, {\em mutually exclusive} (since, eventually, at most one of them is realized). However, we are able to show that the expected payoff collected at any round $t \geq \tau^{\max}$ can only decrease if the two events were instead independent with the same marginal probabilities. 

Let us relabel the arms in $\A'$ in non-increasing order of $p_i(\tau^*_i)$ (breaking ties arbitrarily). Note that by exchanging $\iota$ with the two copies $\iota_a$ and $\iota_b$ the cardinality of $\A'$ can be at most $n+1$. 

Let us construct a $(n+1)$-dimensional vector $\y \in [0,1]^{n+1}$, such that $y_i = x^*_{i, {\tau^*_i}}$ for any $i \in \A'$, and $y_i = 0$, otherwise. Further, we construct a weight vector $\w \in [0,\infty)^{n+1}$, such that $w_i = p_i({\tau^*_i})$ for any $i \in \A'$, and $w_i = 0$, otherwise. Let $\mathcal{D}(\y)$ be a distribution over subsets of $\A'$, where every arm $i \in \A' \setminus \{\iota_a, \iota_b\}$ (except for $\iota_a$ and $\iota_b$) is added to a set $S \sim \mathcal{D}(\y)$, independently, with probability $x^*_{i,\tau^*_{i}}$. The arms $\iota_a$ and $\iota_b$ are also added to $S \sim \mathcal{D}(\y)$ with marginal probabilities $x^*_{\iota_a, \tau^*_{\iota_a}}$ and $x^*_{\iota_b, \tau^*_{\iota_b}}$, respectively, yet not independently (since they are mutually exclusive). 
Therefore, by the above definitions and \Cref{lem:probability}, the set of candidate arms at any fixed round $t \geq \tau^{\max}$ is distributed as $C_t \sim \mathcal{D}(\y)$. 

Let us fix any round $t \geq \tau^{\max}$. Similarly to \Cref{sec:easy}, by using the fact that the actual delay $\sigma_{i,\tau}$ of each arm $i \in \A'$ is deterministically lower bounded by its critical delay $\tau^*_i$, we can lower bound the expected payoff collected by our algorithm at time $t$ as 
\begin{align}
\Ex{C_t \sim \D(\y)}{\max_{S \subseteq C_t, |S| \leq k}\sum_{i \in S} p_i(\sigma_{i,t})} \geq \Ex{C_t \sim \D(\y)}{\max_{S \subseteq C_t, |S| \leq k}\sum_{i \in S} p_i(\tau^*_i)} = \Ex{C_t \sim \D(\y)}{f_{\w,k}(C_t)}, \label{eq:irreg1}
\end{align}
where $\D(\y)$ is defined as described above.

As we show in the following lemma, the RHS of the above inequality can only decrease, if we replace $C_t \sim \D(\y)$ with $C_t \sim \I(\y)$, namely, if we assume that the two copies $\iota_a$ and $\iota_b$ are added to $C_t$ independently, but with the same marginals ($y_{\iota_a}$ and $y_{\iota_b}$, respectively). As we show in the following lemma, the above property in fact holds for any submodular function (including $f_{\w,k}$):


\begin{restatable}{lemma}{restateMutually} \label{lem:mutually}
Let $f: 2^{[n]} \rightarrow \mathbb{R}$ be a submodular function over a ground set of $n$ elements, where $n \geq 2$. Let $\I(\y)$ be a distribution over $2^{[n]}$, where each element $i$ is added in set $S \sim \I(\y)$, independently, with probability $y_i$. Further, let $\mathcal{D}(\y)$ be a distribution over $2^{[n]}$, where each element $i \in [n] \setminus \{\iota_a, \iota_b\}$ is added to a set $S \sim \mathcal{D}(\y)$, independently, with probability $y_i$, except for two distinct elements $\iota_a, \iota_b$, whose addition to $S$ is mutually exclusive but with the same marginals $y_{\iota_a}$ and $y_{\iota_b}$, respectively. In this case, we have  
\begin{align*}
    \Ex{S \sim \D(\y)}{f(S)} \geq \Ex{S \sim \I(\y)}{f(S)}.
\end{align*}
\end{restatable}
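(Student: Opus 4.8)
The plan is to condition on all the elements other than the two special ones, which collapses the inequality to a single application of submodularity. Write $a = \iota_a$ and $b = \iota_b$, and observe that both $\D(\y)$ and $\I(\y)$ treat every element $i \in [n] \setminus \{a,b\}$ in exactly the same way: each is included independently with probability $y_i$. So I would let $T \subseteq [n]\setminus\{a,b\}$ be the (common) random subset of these elements and condition on an arbitrary outcome $T = T_0$. Under this conditioning, $\I(\y)$ inserts $a$ and $b$ independently, yielding the four sets $T_0$, $T_0 \cup \{a\}$, $T_0 \cup \{b\}$, $T_0 \cup \{a,b\}$ with probabilities $(1-y_a)(1-y_b)$, $y_a(1-y_b)$, $(1-y_a)y_b$, $y_a y_b$; whereas $\D(\y)$ yields only $T_0 \cup \{a\}$, $T_0 \cup \{b\}$, $T_0$ with probabilities $y_a$, $y_b$, $1 - y_a - y_b$, since $a$ and $b$ never co-occur.

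Next I would subtract the two conditional expectations by matching the coefficient of each of the four function values $f(T_0)$, $f(T_0\cup\{a\})$, $f(T_0\cup\{b\})$, $f(T_0\cup\{a,b\})$. A direct computation shows that every coefficient in the difference equals $\pm y_a y_b$, and the whole expression telescopes to
\[
\Ex{S \sim \D(\y)}{f(S) \mid T = T_0} - \Ex{S \sim \I(\y)}{f(S) \mid T = T_0} = y_a y_b\Big( f(T_0\cup\{a\}) + f(T_0\cup\{b\}) - f(T_0) - f(T_0\cup\{a,b\}) \Big).
\]
I would then invoke submodularity with $X = T_0 \cup \{a\}$ and $Y = T_0 \cup \{b\}$ (so that $X \cup Y = T_0 \cup \{a,b\}$ and $X \cap Y = T_0$, using $a \neq b$ and $a,b \notin T_0$), which gives $f(X)+f(Y) \geq f(X\cup Y) + f(X \cap Y)$, i.e.\ the parenthesized term is nonnegative. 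Since $y_a y_b \geq 0$, the conditional difference is nonnegative for every fixed $T_0$.

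Finally, because the marginal distribution of $T$ is identical under $\D(\y)$ and $\I(\y)$, I would take the expectation over $T$ and apply the law of total expectation to conclude $\Ex{S \sim \D(\y)}{f(S)} \geq \Ex{S \sim \I(\y)}{f(S)}$. I do not expect a genuine obstacle here: the only step requiring care is the bookkeeping in the coefficient subtraction, which is purely mechanical. The conceptual crux -- and the reason the argument works for arbitrary submodular $f$ -- is that conditioning on $T$ isolates $a$ and $b$ so that the discrepancy between the mutually-exclusive and the independent couplings is exactly the submodular defect of the pair, weighted by the manifestly nonnegative quantity $y_a y_b$.
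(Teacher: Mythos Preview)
Your proposal is correct and follows essentially the same approach as the paper: the paper sums explicitly over all $S' \subseteq [n]\setminus\{\iota_a,\iota_b\}$ where you condition on $T=T_0$, but in both cases the argument isolates the two special elements, produces exactly the identity with coefficients $\pm y_{\iota_a} y_{\iota_b}$, and then applies the submodularity inequality $f(S')+f(S'\cup\{\iota_a,\iota_b\}) \le f(S'\cup\{\iota_a\})+f(S'\cup\{\iota_b\})$ termwise.
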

\begin{proof}
By definition of the distribution $\mathcal{D}(\y)$, since the events $\{\iota_a \in S\}$ and $\{\iota_b \in S\}$ are mutually exclusive, we have that 
$\Prob{S \sim \mathcal{D}(\y)}{\{\iota_a \in S\} \cap \{\iota_b \notin S\}} = \Prob{S \sim \mathcal{D}(\y)}{\iota_a \in S} = y_{\iota_a}$ and $\Prob{S \sim \mathcal{D}(\y)}{\{\iota_a \notin S\} \cap \{\iota_b \in S\}} = \Prob{S \sim \mathcal{D}(\y)}{\iota_b \in S} = y_{\iota_b}$. Further, we have that $\Prob{S \sim \mathcal{D}(\y)}{\{\iota_a \notin S\} \cap \{\iota_b \notin S\}} = 1 - \Prob{S \sim \mathcal{D}(\y)}{\{\iota_a \in S\} \cup \{\iota_b \in S\}} = 1 - y_{\iota_a} - y_{\iota_b}$.

Using the above, we can rewrite the LHS of the desired inequality as
\begin{align*}
    \Ex{S \sim \D(\y)}{f(S)} = \sum_{S' \subseteq [n] \setminus \{\iota_a, \iota_b\}} \prod_{i \in S'} y_i \prod_{\substack{i \notin S' \\ i \neq \iota_a, \iota_b}}(1 - y_i)\Big((1- y_{\iota_a} - y_{\iota_b})f(S') + y_{\iota_a} f(S' + \iota_a) + y_{\iota_b} f(S' + \iota_b)\Big).
\end{align*}

Now, for any fixed set $S' \subseteq [n] \setminus \{\iota_a, \iota_b\}$, the following identities can be easily verified: 
\begin{align*}
(1- y_{\iota_a} - y_{\iota_b})f(S') &= (1-y_{\iota_a})(1-y_{\iota_b}) f(S') - y_{\iota_a}y_{\iota_b} f(S') \\
y_{\iota_a} f(S' + \iota_a) &= y_{\iota_a}(1 - y_{\iota_b}) f(S' + \iota_a) + y_{\iota_a}y_{\iota_b}f(S' + \iota_a)\\
y_{\iota_b} f(S' + \iota_b) &= y_{\iota_b}(1 - y_{\iota_a}) f(S' + \iota_b) + y_{\iota_a}y_{\iota_b}f(S' + \iota_b).
\end{align*}
By combining the above identities, we get that 
\begin{align*}
&(1- y_{\iota_a} - y_{\iota_b})f(S') + y_{\iota_a} f(S' + \iota_a) + y_{\iota_b} f(S' + \iota_b) \\
&= (1-y_{\iota_a})(1-y_{\iota_b}) f(S') + y_{\iota_a}(1 - y_{\iota_b}) f(S' + \iota_a) + y_{\iota_b}(1 - y_{\iota_a}) f(S' + \iota_b) + y_{\iota_a}y_{\iota_b} \left( f(S' + \iota_a) + f(S' + \iota_b) - f(S')\right) \\ 
&\geq (1-y_{\iota_a})(1-y_{\iota_b}) f(S') + y_{\iota_a}(1 - y_{\iota_b}) f(S' + \iota_a) + y_{\iota_b}(1 - y_{\iota_a}) f(S' + \iota_b) + y_{\iota_a}y_{\iota_b} f(S' + \iota_a + \iota_b),
\end{align*}
where, in the inequality, we use that $f(S') + f(S' + \iota_a + \iota_b) \leq f(S'+\iota_a) + f(S'+\iota_b)$, by submodularity of $f$. 

By applying the above inequality for each $S' \subseteq [n] \setminus \{\iota_a, \iota_b\}$, we can conclude that
\begin{align*}
    \Ex{S \sim \D(\y)}{f(S)} &= \sum_{S' \subseteq [n] \setminus \{\iota_a, \iota_b\}} \prod_{i \in S'} y_i \prod_{\substack{i \notin S' \\ i \neq \iota_a, \iota_b}}(1 - y_i)\Big((1- y_{\iota_a} - y_{\iota_b})f(S') + y_{\iota_a} f(S' + \iota_a) + y_{\iota_b} f(S' + \iota_b)\Big) \\
    &\geq \sum_{S' \subseteq [n] \setminus \{\iota_a, \iota_b\}} \prod_{i \in S'} y_i \prod_{\substack{i \notin S' \\ i \neq \iota_a, \iota_b}}(1 - y_i)\Big((1-y_{\iota_a})(1-y_{\iota_b}) f(S') + y_{\iota_a}(1 - y_{\iota_b}) f(S' + \iota_a) \\ 
    &\qquad\qquad\qquad\qquad\qquad\qquad\qquad\qquad + y_{\iota_b}(1 - y_{\iota_a}) f(S' + \iota_b) + y_{\iota_a}y_{\iota_b} f(S' + \iota_a + \iota_b)\Big) \\
    &= \sum_{S \subseteq [n]} \prod_{i \in S} y_i \prod_{i \notin S}(1 - y_i) f(S)\\
    &= \Ex{S \sim \I(\y)}{f(S)},
\end{align*}
thus proving the lemma. 
\end{proof}

By combining inequality \eqref{eq:irreg1} with \Cref{lem:mutually} for $n+1$ elements (recall, the function $f_{\w,k}$ is submodular), the expected reward collected at any round $t \geq \tau^{\max}$ can be lower bounded as 
$$
\Ex{C_t \sim \D(\y)}{\max_{S \subseteq C_t, |S| \leq k}\sum_{i \in S} p_i(\sigma_{i,t})} \geq \Ex{C_t \sim \D(\y)}{f_{\w,k}(C_t)} \geq \Ex{C_t \sim \I(\y)}{f_{\w,k}(C_t)} = F_{\w,k}(\y).
$$
In addition, by combining \Cref{lem:correlationgap} with \Cref{lem:concaveupperbound} for the above choice of vectors $\w$ and $\y$  (again, it holds $\| \y \|_1 \leq k$), for any $t \geq \tau^{\max}$, it follows that 
$$
\Ex{C_t \sim \D(\y)}{\max_{S \subseteq C_t, |S| \leq k}\sum_{i \in S} p_i(\sigma_{i,t})} \geq F_{\w,k}(\y) \geq {\left(1 - \frac{k^k}{e^k k!} \right)} \cdot f^+_{\w,k}(\y) \geq {\left(1 - \frac{k^k}{e^k k!} \right)} V^*, 
$$
where $V^*$ is the optimal value of \eqref{lp:LP}.

Finally, using the fact that $V^* \geq \nicefrac{\opt(T)}{T}$ by \Cref{lem:lpupperbound} and applying linearity of expectations over all rounds $t$, the following result follows directly:

\begin{restatable}{theorem}{restateMainone} \label{thm:main1}
For any instance of $k$-\RS, where $\opt(T)$ is the optimal expected payoff that can be collected in $T$ rounds, the total expected payoff collected by \Cref{algo} is at least 
$$
{\left(1 - \frac{k^k}{e^k k!} \right)} \opt(T) - \mathcal{O}(k \cdot \tau^{\max}) ~~~\approx~~~ \left(1 - \frac{1}{\sqrt{2\pi k}}\right) \opt(T) - \mathcal{O}(k \cdot \tau^{\max}).
$$
\end{restatable}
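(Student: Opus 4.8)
The plan is to assemble the per-round guarantee already established in \Cref{sec:irregular} into a cumulative bound by summing over the horizon and separately accounting for a short transient phase. Concretely, the analysis preceding the statement shows that for every round $t \geq \tau^{\max}$ the expected payoff collected by \Cref{algo} is at least $\left(1 - \nicefrac{k^k}{e^k k!}\right) V^*$, where $V^*$ is the optimal value of \eqref{lp:LP}; this is the substantive part of the argument, so the remaining work is essentially bookkeeping over rounds.

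First I would pin down the regime $t \geq \tau^{\max}$ in which the per-round bound is valid. The bound requires $t \geq \tau^{\max}$ because the key inequality $p_i(\sigma_{i,t}) \geq p_i(\tau^*_i)$ used in \eqref{eq:irreg1} — which relies on monotonicity of the payoff functions — needs the actual delay $\sigma_{i,t}$ to be at least the critical delay $\tau^*_i$. Since a supported arm $i$ can be played only on a round where it is a candidate, and candidacy recurs exactly every $\tau^*_i \leq \tau^{\max}$ rounds, consecutive plays of $i$ are separated by at least $\tau^*_i$ rounds; hence once $t \geq \tau^{\max}$ this deterministic lower bound on $\sigma_{i,t}$ is in force for every supported arm, and each such round contributes at least $\left(1 - \nicefrac{k^k}{e^k k!}\right) V^*$ in expectation.

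Next, by linearity of expectation I would sum the per-round bound over $t = \tau^{\max}, \ldots, T$, lower-bounding the contribution of the first $\tau^{\max}$ rounds by $0$, to obtain a total of at least $\left(1 - \nicefrac{k^k}{e^k k!}\right) V^* \,(T - \tau^{\max})$. I would then write $V^*(T - \tau^{\max}) = V^* T - V^* \tau^{\max}$ and invoke \Cref{lem:lpupperbound} to replace $V^* T$ by a quantity dominating $\opt(T)$, while controlling the slack $V^* \tau^{\max}$ by observing that constraint \eqref{lp:total}, together with all payoffs lying in $[0,1]$, forces $V^* \leq k$. Since $\left(1 - \nicefrac{k^k}{e^k k!}\right) \leq 1$, I can pull the approximation factor onto $\opt(T)$ and absorb the remaining $V^*\tau^{\max} \leq k\tau^{\max}$ into an additive $\mathcal{O}(k\tau^{\max})$ term, yielding the claimed bound; the $\approx \left(1 - \nicefrac{1}{\sqrt{2\pi k}}\right)$ form then follows directly from the Stirling estimate recorded in \Cref{lem:correlationgap}.

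I do not anticipate a genuine obstacle here, since the heavy lifting — the correlation-gap argument of \Cref{sec:easy} and the mutual-exclusivity reduction of \Cref{lem:mutually} — is complete before the statement, and the theorem is a direct aggregation. The only point demanding care is the transient phase: I must apply the per-round bound strictly for $t \geq \tau^{\max}$, and charge the truncation of the first $\tau^{\max}$ rounds (the gap between summing over $T$ and over $T-\tau^{\max}$, equal to $V^*\tau^{\max}$) to the additive $\mathcal{O}(k\tau^{\max})$ term rather than letting it degrade the multiplicative factor. The bound $V^* \leq k$ is exactly what renders this additive term explicit.
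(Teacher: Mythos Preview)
Your proposal is correct and follows essentially the same approach as the paper: the per-round lower bound $\left(1 - \nicefrac{k^k}{e^k k!}\right) V^*$ for $t \geq \tau^{\max}$ is summed via linearity of expectation, combined with $T V^* \geq \opt(T)$ from \Cref{lem:lpupperbound}, and the transient loss over the first $\tau^{\max}$ rounds is absorbed into the additive $\mathcal{O}(k\,\tau^{\max})$ term using $V^* \leq k$ (which the paper alludes to by saying ``all payoffs are bounded in $[0,1]$''). Your exposition is in fact more explicit than the paper's, which simply states that the theorem ``follows directly'' from these ingredients.
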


We remark that, although the asymptotic approximation guarantee of \Cref{algo} is the same as the one in \cite{SLZZ21}, the constants of our algorithm are significantly better. In particular, for $k \in \{1,2,3,4,5,10\}$, the guarantees of \cite{SLZZ21} and \Cref{algo} are $\{0.25, 0.33,0.40,0.44,0.46,0.57\}$ and $\{0.63, 0.72,0.77,0.80,0.82,0.87\}$, respectively.

\section{Online Learning with Sublinear Regret}
\label{sec:learning}
In this section, we consider the learning setting of $k$-\RS where, for every arm $i \in \A$ and delay $\tau$, the payoff for playing arm $i$ under delay $\tau$ is drawn independently from a distribution of mean $\f_i(\tau)$, bounded in $[0,1]$. Assuming semi-bandit feedback on the realized payoffs, our goal is to show that there exists a bandit variant of \Cref{algo} with sublinear $\gamma_k$-approximate regret guarantee for $k$-\RS, defined as 
$$
\text{Reg}_{\gamma_k}(T) = \gamma_k \opt(T) - \text{R}(T),
$$
where $\gamma_k = 1 - \nicefrac{k^k}{e^k k!}$ is the (multiplicative) approximation guarantee of \Cref{algo} and $R(T)$ is the total expected payoff collected by the bandit algorithm in $T$ rounds. 

\paragraph{Robustness of \Cref{algo}.} As a first step, we study the robustness of \Cref{algo} under small perturbations of the payoff functions; in particular, we analyze its approximation guarantee when it runs using estimates $\{\widehat{p}_i\}_{i \in \A}$ instead of the actual payoffs $\{{p}_i\}_{i \in \A}$. We remark that the monotonicity of the (estimated) payoff functions is {\em not required} for the correctness of \Cref{algo}, but only for proving its approximation guarantee.

\begin{restatable}{lemma}{restateRobustness} \label{lem:robustness}
Let $\widehat{p}_{i}$ be an estimate of the payoff function $p_{i}$ for each $i \in \A$, such that $|\widehat{p}_{i}(\tau) - p_{i}(\tau)| \leq \epsilon, \forall \tau \in \mathbb{N}$ for some $\epsilon \in (0,1)$. The expected payoff collected in $T$ rounds by \Cref{algo}, when it uses the estimates instead of the actual payoff functions, is lower bounded by 
$$\gamma_k \opt(T) - \mathcal{O}(k \cdot \epsilon \cdot T + k \cdot \tau^{\max}).$$ 
\end{restatable}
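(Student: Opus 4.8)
The plan is to run \Cref{algo} using the estimated payoffs $\{\widehat{p}_i\}$ and to argue that neither the LP optimum nor the per-round collected payoff can degrade by more than an $\mathcal{O}(k\epsilon)$ additive term relative to the guarantee of \Cref{thm:main1}. I will track two separate places where the estimation error enters: (i) the value of the LP relaxation solved on the estimated payoffs versus the true one, and (ii) the payoff that \Cref{algo} actually collects at each round (which is measured with respect to the \emph{true} payoffs $p_i$, since those govern the real environment) versus the payoff it believes it collects.

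First I would compare the two LP values. Let $\widehat{V}^*$ be the optimal value of \eqref{lp:LP} built from the estimates and $V^*$ the value built from the true payoffs. Since $|\widehat{p}_i(\tau)-p_i(\tau)|\le\epsilon$ for all $i,\tau$, any feasible $\x$ satisfies $\big|\sum_{i}\sum_\tau (\widehat{p}_i(\tau)-p_i(\tau))x_{i,\tau}\big|\le \epsilon \sum_i\sum_\tau x_{i,\tau}\le k\epsilon$ by constraint \eqref{lp:total}. Applying this to an optimal solution of each LP gives $\widehat{V}^*\ge V^*-k\epsilon$. Combined with \Cref{lem:lpupperbound} ($V^*\ge \opt(T)/T$), this shows the estimated LP still certifies an average value of at least $\opt(T)/T - k\epsilon$.

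Next I would redo the per-round analysis of \Cref{sec:easy,sec:irregular} with the estimated weight vector $\widehat{\w}$, where $\widehat{w}_i=\widehat{p}_i(\widehat\tau^*_i)$. The key observation is that the correlation-gap chain (\Cref{lem:correlationgap}, \Cref{lem:concaveupperbound}) and the irregular-arm argument (\Cref{lem:mutually}) are purely combinatorial and apply verbatim to any weight vector; so running the algorithm on the estimates yields, at each round $t\ge\tau^{\max}$, an expected payoff \emph{as measured by the estimates} of at least $\gamma_k\widehat{V}^*\ge \gamma_k(\opt(T)/T - k\epsilon)$. To convert this into a bound on the \emph{true} payoff collected, I note that the set $A_t$ of at most $k$ arms played has true total payoff differing from its estimated total payoff by at most $\sum_{i\in A_t}|\widehat{p}_i(\sigma_{i,t})-p_i(\sigma_{i,t})|\le k\epsilon$, again using $|A_t|\le k$ and monotonicity only to control the actual-versus-critical-delay step. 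Summing the resulting per-round lower bound $\gamma_k(\opt(T)/T)-\gamma_k k\epsilon - k\epsilon$ over all $T$ rounds, folding the $\mathcal{O}(k\cdot\tau^{\max})$ boundary term (rounds $t<\tau^{\max}$, exactly as in \Cref{thm:main1}) into the additive slack, and using $\gamma_k\le 1$, gives the claimed bound $\gamma_k\opt(T)-\mathcal{O}(k\epsilon T + k\tau^{\max})$.

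The main obstacle I anticipate is bookkeeping the error \emph{consistently across the two payoff systems}: the algorithm chooses both its critical delays and its per-round maximizing subset $A_t$ using the estimates, so the $\widehat\tau^*_i$ (and hence the whole interleaving schedule) may differ from what the true payoffs would induce. The clean way around this is to never try to relate the estimated schedule to a true-optimal schedule directly; instead I bound the true collected payoff from below by the estimated collected payoff minus $k\epsilon$ per round, and bound the estimated collected payoff from below by $\gamma_k\widehat{V}^*$ using the unchanged combinatorial machinery, and only then relate $\widehat{V}^*$ to $V^*$ to $\opt(T)$. Keeping these three comparisons in the right order is what makes the two $k\epsilon$ terms (one from the LP, one from the realized payoffs) combine into a single $\mathcal{O}(k\epsilon T)$ without any cross-terms.
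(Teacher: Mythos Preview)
Your proposal is correct and follows essentially the same three-step decomposition as the paper: (i) $\widehat V^*\ge V^*-k\epsilon$ via constraint~\eqref{lp:total}, (ii) rerun the correlation-gap analysis with the estimated weights to get a $\gamma_k\widehat V^*$ per-round lower bound, and (iii) convert between true and estimated per-round payoff at a $k\epsilon$ cost.

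One ordering subtlety to be careful about: in your step~(ii) you lower-bound the \emph{estimated} collected payoff $\max_{S}\sum_{i\in S}\widehat p_i(\widehat\sigma_{i,t})$ by $\gamma_k\widehat V^*$, but the analysis of \Cref{sec:easy,sec:irregular} needs the monotonicity step $\widehat p_i(\widehat\sigma_{i,t})\ge \widehat p_i(\widehat\tau^*_i)$, and the estimates $\widehat p_i$ are \emph{not} assumed monotone. The paper sidesteps this by applying monotonicity to the \emph{true} $p_i$ first---passing from $p_i(\widehat\sigma_{i,t})$ to $p_i(\widehat\tau^*_i)$---and only then switching to $\widehat p_i(\widehat\tau^*_i)$ at the cost of $k\epsilon$; from there the combinatorial machinery applies to the fixed weight vector $\widehat w_i=\widehat p_i(\widehat\tau^*_i)$ with no further monotonicity needed. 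Reordering your argument this way (or absorbing an extra $k\epsilon$ by bouncing back through $p_i$) closes the gap without changing the final $\mathcal O(k\epsilon T)$ bound.
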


\paragraph{Estimation of the payoff functions.}
By using standard concentration results \cite{Hoeffding}, we bound the number of samples required for each arm-delay pair in order to get an $\epsilon$-estimate $\widehat{p}_i$ for the mean payoff function ${p}_i$ of every arm $i \in \A$. 

\begin{restatable}{lemma}{restateEstimation} \label{lem:estimation}
For any $\epsilon,\delta \in (0,1)$, let $\widehat{p}_i(\tau)$ be the empirical average of samples drawn from $p_i(\tau)$ for every arm $i \in \A$ and delay $\tau$, using $m = \frac{1}{2\epsilon^2} \ln\left(\frac{2 \tau^{\max} n}{\delta}\right)$ samples, where $n$ is the number of arms. Then, with probability $1-\delta$, it holds $|\widehat{p}_{i}(\tau) - p_{i}(\tau)| \leq \epsilon$ for every $i \in \A$ and $\tau \in [\tau^{\max}]$.
\end{restatable}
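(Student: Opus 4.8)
**

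The plan is to prove Lemma~\ref{lem:estimation} by a direct application of Hoeffding's inequality to each arm-delay pair, followed by a union bound over all such pairs. The statement is a standard sample-complexity guarantee, so the main work is bookkeeping the number of events in the union bound and inverting the concentration inequality to read off the stated sample count $m$.

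First, I would fix an arbitrary arm $i \in \A$ and delay $\tau \in [\tau^{\max}]$. The estimate $\widehat{p}_i(\tau)$ is the empirical mean of $m$ independent samples, each drawn from a distribution with mean $p_i(\tau)$ and support contained in $[0,1]$. Hoeffding's inequality then gives, for the single pair $(i,\tau)$,
\[
\Pro{|\widehat{p}_i(\tau) - p_i(\tau)| > \epsilon} \leq 2\exp\left(-2 m \epsilon^2\right).
\]
Next, I would substitute the prescribed value $m = \frac{1}{2\epsilon^2}\ln\!\left(\frac{2\tau^{\max} n}{\delta}\right)$ into the exponent; this is the routine inversion step, and it yields a per-pair failure probability of at most $2\exp\!\left(-\ln\!\left(\frac{2\tau^{\max} n}{\delta}\right)\right) = \frac{\delta}{\tau^{\max} n}$.

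Finally, I would take a union bound over all arm-delay pairs. There are $n$ arms and, for each, at most $\tau^{\max}$ relevant delays $\tau \in [\tau^{\max}]$, giving at most $n \cdot \tau^{\max}$ events in total. Summing the per-pair failure probability over these events gives a total failure probability of at most $n \cdot \tau^{\max} \cdot \frac{\delta}{\tau^{\max} n} = \delta$. Taking the complement, with probability at least $1-\delta$ we have $|\widehat{p}_i(\tau) - p_i(\tau)| \leq \epsilon$ simultaneously for every $i \in \A$ and every $\tau \in [\tau^{\max}]$, which is exactly the claim.

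I do not anticipate a genuine obstacle here, since the result is a textbook concentration argument; the only points requiring care are that the factor of $2\tau^{\max} n$ inside the logarithm (rather than $\tau^{\max} n$) correctly absorbs the factor of $2$ from the two-sided Hoeffding bound, and that restricting attention to $\tau \in [\tau^{\max}]$ is justified because the recovery-time assumption makes $p_i(\tau)$ constant (hence estimable from the $\tau = \tau^{\max}$ samples) for all larger delays, so only finitely many delays need to be estimated.
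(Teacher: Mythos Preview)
Your proposal is correct and follows essentially the same approach as the paper: apply Hoeffding's inequality to each arm--delay pair, substitute the prescribed $m$ to obtain a per-pair failure probability of $\delta/(\tau^{\max} n)$, and then take a union bound over the $n\cdot\tau^{\max}$ pairs. Your additional remarks about the factor of $2$ and the restriction to $\tau\in[\tau^{\max}]$ are accurate but go slightly beyond what the paper spells out.
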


The following observation provides an upper bound on the number of rounds required in order to get $m$ samples for each arm-delay pair in any $k$-\RS instance:

\begin{restatable}{fact}{restateFactEstimation} \label{fact:estimation}
In any $k$-\RS instance, the number of rounds required to collect $m$ independent samples from $p_i(\tau)$, for each arm $i \in \A$ and delay $\tau \in [\tau^{\max}]$, is upper-bounded by $\mathcal{O}(\nicefrac{n m (\tau^{\max})^2}{k})$.
\end{restatable}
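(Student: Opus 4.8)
The plan is to exhibit an explicit exploration schedule and bound its length by a direct counting argument. There are $n \cdot \tau^{\max}$ distinct arm--delay pairs, each requiring $m$ independent samples, so the ``work'' to be done is $n \cdot \tau^{\max} \cdot m$ samples; the content of the claim is that the extra overhead imposed by the delay constraints and the budget of $k$ plays per round inflates this only to $\mathcal{O}(n m (\tau^{\max})^2 / k)$ rounds. The structural observation I would exploit is twofold: producing a single sample of $p_i(\tau)$ inherently costs $\Theta(\tau)$ rounds (one must refrain from playing $i$ for $\tau - 1$ rounds between two consecutive plays of $i$), and an \emph{identical} schedule can be run simultaneously on a whole batch of $k$ arms, which is precisely what yields the $1/k$ savings.

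First I would analyze a single arm $i$ in isolation. To obtain $m$ independent samples of $p_i(\tau)$ for a fixed delay $\tau$, play arm $i$ at rounds $0, \tau, 2\tau, \ldots, m\tau$: each play after the first occurs exactly $\tau$ rounds after the previous one, hence realizes delay $\tau$, and since payoff realizations are independent across rounds it contributes an independent draw from $p_i(\tau)$. This yields $m$ samples in $m\tau$ rounds, the initial play serving as a discarded warm-up. Cycling sequentially through all delays $\tau \in [\tau^{\max}]$, inserting at most one warm-up play when switching from one delay to the next, costs $\sum_{\tau=1}^{\tau^{\max}} (m\tau + \mathcal{O}(1)) = \mathcal{O}(m (\tau^{\max})^2)$ rounds and collects $m$ independent samples of every delay for arm $i$.

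Next I would parallelize across arms. Because the schedule above depends only on the target delays and not on the identity of the arm, it can be run in lockstep on a batch of $k$ arms at once: in every round either all $k$ arms of the batch are played or none are, so the per-round budget of $k$ plays is never exceeded, and each arm in the batch independently accumulates its $m$ samples per delay. Partitioning the $n$ arms into $\lceil n/k \rceil$ batches and processing the batches one after another, the total number of rounds is $\lceil n/k \rceil \cdot \mathcal{O}(m (\tau^{\max})^2)$. Since $k < n$ gives $\lceil n/k \rceil \leq 2n/k = \mathcal{O}(n/k)$, this is $\mathcal{O}(n m (\tau^{\max})^2 / k)$, as claimed.

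The argument is essentially a scheduling and counting exercise, so I do not anticipate a deep obstacle; the step requiring the most care is verifying that the shared-across-a-batch schedule simultaneously (a) realizes exactly the intended delay $\tau$ at every counted play, (b) never requests more than $k$ plays in any round, and (c) keeps the per-delay cost at $\Theta(\tau)$, so that the summation over $\tau$ produces the factor $(\tau^{\max})^2$ rather than a larger power. Confirming independence of the collected samples and checking that the $\mathcal{O}(1)$ warm-up overhead between consecutive delays cannot accumulate to more than $\mathcal{O}(\tau^{\max})$ per batch are the remaining bookkeeping details.
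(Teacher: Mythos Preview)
Your proposal is correct and follows essentially the same approach as the paper: bound the cost for a single arm--delay pair by $\mathcal{O}(m\tau)$ rounds, sum over $\tau\in[\tau^{\max}]$ to get $\mathcal{O}(m(\tau^{\max})^2)$ per arm, and then parallelize across batches of $k$ arms to divide the total by $k$. The paper's proof is terser (it simply asserts the factor-$k$ speedup from parallelization without spelling out the lockstep batching), but the underlying argument is the same.
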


\paragraph{Bandit algorithm with sublinear regret.} By combining the above elements, it is easy to design an Explore-then-Commit (ETC) variant of \cref{algo}, achieving sublinear regret in the bandit setting. For simplicity, we assume here that the time horizon $T$ is known to the player -- an assumption that can be relaxed using the doubling trick method \cite{LS18}. 

Let $m = \nicefrac{1}{2\epsilon^2} \ln\left(\nicefrac{2 \tau^{\max} n}{\delta}\right)$ for some $\epsilon, \delta \in (0,1)$. In the first $\mathcal{O}(\nicefrac{n m (\tau^{\max})^2}{k})$ rounds, the bandit algorithm collects $m$ independent samples from $p_i(\tau)$, for each $i \in \A$ and $\tau \in [\tau^{\max}]$. Then, by taking the empirical average of these samples, it constructs estimates $\widehat{p}_i(\tau)$ for each $i$ and $\tau$. For the rest of the time horizon, the algorithm simulates \Cref{algo}, using the empirical estimates $\widehat{p}_i(\tau)$ in place of the (initially unknown) payoff functions.

By setting the parameters $\epsilon, \delta$ accordingly and combining \Cref{lem:robustness,lem:estimation} with Fact \ref{fact:estimation}, we prove the following result:
\begin{restatable}{theorem}{restateRegret} \label{thm:regret}
There exists a learning adaptation of \Cref{algo} in the semi-bandit setting, for which the $\gamma_k$-approximate regret for $T$ rounds can be upper-bounded as
$$
\mathcal{O}\left(n^{1/3} \cdot (k \cdot \tau^{\max})^{2/3} \ln^{1/3}(\tau^{\max} n T) \cdot T^{2/3}  + k \cdot \tau^{\max}\right).
$$
\end{restatable}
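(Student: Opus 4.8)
The plan is to analyze the Explore-then-Commit scheme described above and then optimize its two free parameters $\epsilon$ and $\delta$. Write $L = \mathcal{O}(n m (\tau^{\max})^2 / k)$ for the length of the exploration phase guaranteed by \Cref{fact:estimation}, and let $\mathcal{G}$ denote the ``good'' event that $|\widehat{p}_i(\tau) - p_i(\tau)| \le \epsilon$ holds simultaneously for every arm $i \in \A$ and every delay $\tau \in [\tau^{\max}]$. By \Cref{lem:estimation}, taking $m = \tfrac{1}{2\epsilon^2}\ln(2\tau^{\max} n / \delta)$ samples per arm-delay pair guarantees $\Pro{\mathcal{G}} \ge 1 - \delta$. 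Since the commit phase draws fresh stochastic rewards that are independent of the exploration samples, \Cref{lem:robustness} may be applied verbatim on the event $\mathcal{G}$.

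I would then decompose $R(T)$ according to whether $\mathcal{G}$ holds, lower bounding the exploration payoff by $0$ in both cases. Conditioned on $\mathcal{G}$, the commit phase runs \Cref{algo} with $\epsilon$-accurate estimates over the remaining $T-L$ rounds, so \Cref{lem:robustness} gives $R(T) \ge \gamma_k \opt(T-L) - \mathcal{O}(k\epsilon T + k\tau^{\max})$. On $\mathcal{G}^c$ I use only the crude bounds $R(T) \ge 0$ and $\opt(T) \le kT$ (at most $k$ arms of payoff $\le 1$ per round). Since each round contributes at most $k$ to the optimum, $\opt(T) - \opt(T-L) \le kL$, and this single term simultaneously charges for the reward forgone during exploration. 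Taking expectations and using $\Pro{\mathcal{G}^c}\le\delta$ yields
\[
\text{Reg}_{\gamma_k}(T) \;\le\; \mathcal{O}\!\left(\delta k T + k \epsilon T + k L + k \tau^{\max}\right),
\]
and substituting $kL = \mathcal{O}\!\big(n (\tau^{\max})^2 \epsilon^{-2} \ln(\tau^{\max} n / \delta)\big)$ makes the dependence on $\epsilon$ and $\delta$ fully explicit.

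It remains to optimize the parameters. I would first set $\delta = 1/T$, which turns the failure term $\delta k T$ into $k$ (absorbed into $k\tau^{\max}$) and replaces the logarithm by $\ln(\tau^{\max} n T)$. This leaves two $\epsilon$-dependent terms with opposite monotonicity: the commit error $k\epsilon T$, increasing in $\epsilon$, and the exploration cost $n (\tau^{\max})^2 \epsilon^{-2} \ln(\tau^{\max} n T)$, decreasing in $\epsilon$. Balancing them by setting $\epsilon = \big(n (\tau^{\max})^2 \ln(\tau^{\max} n T) / (k T)\big)^{1/3}$ equalizes both at $n^{1/3} (k\tau^{\max})^{2/3} \ln^{1/3}(\tau^{\max} n T)\, T^{2/3}$, which is exactly the leading term claimed, while the residual additive $k\tau^{\max}$ survives from \Cref{lem:robustness}.

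Because \Cref{lem:robustness}, \Cref{lem:estimation}, and \Cref{fact:estimation} are already in hand, the derivation is essentially bookkeeping plus a parameter optimization, and no step is genuinely hard. The points demanding care are (i) charging the failure event $\mathcal{G}^c$ using only $R(T) \ge 0$ and $\opt(T) \le kT$, so that its cost $\delta k T$ is controllable by the choice $\delta = 1/T$, and (ii) passing from $\opt(T-L)$ back to the full-horizon $\opt(T)$ through $\opt(T)-\opt(T-L) \le kL$, which correctly identifies the horizon shift with the exploration regret rather than introducing a spurious additional term. The genuine obstacle is the three-way balance among exploration cost, commitment error, and failure probability; getting the exponent $1/3$ in $\epsilon$ right is precisely what produces the characteristic $T^{2/3}$ rate of ETC, and verifying that the two balanced terms coincide up to constants is the only nontrivial algebra.
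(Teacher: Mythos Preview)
Your proposal is correct and follows essentially the same approach as the paper: an Explore-then-Commit analysis that bounds the exploration cost by $kL$, invokes \Cref{lem:robustness} on the good event, charges the failure event at $\delta k T$, and then sets $\delta = 1/T$ and balances $k\epsilon T$ against $n(\tau^{\max})^2\epsilon^{-2}\ln(\tau^{\max} n T)$ to obtain the stated $T^{2/3}$ bound. The only cosmetic difference is that you route the exploration loss through $\opt(T)-\opt(T-L)\le kL$, whereas the paper simply bounds the per-round exploration regret by $k$; both yield the same $kL$ term.
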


\section*{Conclusion and Further Directions}

Following the recent line of work on non-stationary bandits with recharging payoffs, we revisited one of the most general formulations studied in the area and significantly improved the state-of-the-art for the planning setting. In particular, when at most one arm is played per round, we designed an algorithm that collects at least $63\%$ (asymptotically and in expectation) of the optimal payoff, improving the best-known $25\%$, due to prior work. By providing results on the robustness and sample complexity of our algorithm, we transformed the latter into a bandit algorithm with sublinear regret. 
Our work leaves a number of interesting open questions. An approximation guarantee of $\left(1 - \nicefrac{1}{e}\right)$ has been proved to be the best achievable -- under standard complexity assumptions -- for a number combinatorial problems. Therefore, an immediate future direction would be to either provide a matching hardness result for the planning setting of the problem, or to further improve on the best-known approximation. Another interesting question would be to explore whether one can design a bandit adaptation of our algorithm with improved regret guarantees. Finally, the empirical evaluation of our algorithm on real (or artificial) data is another interesting direction.

\bibliographystyle{alpha}
\bibliography{ref.bib}
\newpage

\appendix

\section{Related Work} \label{sec:relatedwork}

Since the work of Thompson \cite{T33} and later \cite{LR85}, the (stochastic) multi-armed bandits (MAB) problem in stationary environments has been thoroughly studied (see \cite{BB12,LS18,ACBF02} and references therein). In an attempt to capture non-stationary environments, where the payoff distributions change over time, a number of generalizations of MAB -- which are also special cases of reinforcement learning (RL) -- have been proposed. Two particularly interesting classes are {\em rested} and {\em restless} bandits \cite{TL12}, where each action is associated with a state-machine and has a different payoff distribution depending on its current state. In rested bandits \cite{Gittins79, TL12}, the state of each action (hence, its payoff distribution) changes stochastically only when the arm is played, while in restless bandits \cite{Whittle88, TL12} the state changes at every round, independently of the player's actions. Optimal algorithms for rested and restless bandits essentially result from solving Bellman’s equations \cite{Bertsekas} -- an approach which requires an exponentially large state space in the number of arms. In fact, Papadimitriou and Tsitsiklis \cite{PT99} show that it is PSPACE-hard to optimally solve (or even approximate) restless bandits, even in the simpler case where state transitions are deterministic. One of the first attempts to provide constant approximation algorithms for special cases of restless bandits is due to Guha et al. \cite{GMS10}, who give a $2$-approximation for a special case of restless bandits, which they call ``monotone'' bandits. Another interesting class of non-stationary bandits is that of {\em sleeping} bandits \cite{KNMS10,saha2020improved}, which generalizes the standard MAB setting, given that only an adversarially chosen subset of actions can be played at each round. Finally, we address the reader to \cite{Besbes,Levine,Keskin,auer2019adaptively} for additional non-stationary bandit models.

Shifting our focus to the case of ``recharging'' (or ``recovering'') payoffs, Immorlica and Kleinberg \cite{KI18} first consider the $1$-\RS problem, under the additional assumption that the payoff functions are weakly concave and weakly increasing over the whole time horizon. For this problem, the authors provide a PTAS that achieves a $(1-\epsilon)$-approximation (asymptotically) to the optimal planning, and show how it can be adapted into a learning algorithm with sublinear regret guarantees. To achieve their approximation, they first compute an ``optimal'' playing frequency for each arm via a concave relaxation, and then construct an approximate planning schedule, in which the rate of playing each arm closely approximates this frequency. In this direction, they authors combine enumeration techniques with the novel technique of {\em interleaved rounding}. The main idea behind the latter is that each arm is first associated with a particular (randomly perturbed) sequence of real numbers and, then, the arms are played sequentially according to the order they appear on the real line. 

Basu et al.\ \cite{BSSS19} study a variant of stochastic multi-armed bandits where, after each play, an arm becomes unavailable for a known number of rounds (blocking time). As they show, for the case of one arm per round, a simple greedy heuristic, which plays, at each round, the arm of highest expected payoff among the non-blocked arms, yields (asymptotically) a $\left(1 - \nicefrac{1}{e}\right)$-approximation. In order to prove this guarantee, the authors compare the payoff collected by their algorithm with the optimal solution of an LP-relaxation, by carefully characterizing the latter via analyzing its optimality conditions. Although not stated explicitly in \cite{BSSS19}, the ``hard'' constraint of not pulling a blocked arm is equivalent to choosing the payoff function of each arm to be a Heaviside step function parameterized by its blocking time and scaled by its baseline mean payoff. In other words, the payoff drops to zero for a fixed number of rounds (which, in this case coincides with the recovery time), and then rises up to a baseline. Since pulling a zero-payoff arm can only harm the overall solution, the two problems are (without loss of generality) equivalent, and \cite{BSSS19} becomes a special case of $1$-\RS. Notice, however, that the greedy approach of \cite{BSSS19} fails to provide any non-trivial approximation guarantee for the $1$-\RS setting.

Shortly after its introduction, the model of Basu et al.\ \cite{BSSS19} has been also studied in contextual \cite{BPCS20}, adversarial \cite{BCMT20}, and combinatorial environments \cite{APBCS21,PC21}. In \cite{PC21}, Papadigenopoulos and Caramanis study the blocking setting in the regime where more than one arms can be played at each round, subject to matroid constraints. For the planning setting, they provide a $(1 - \nicefrac{1}{e})$-approximation algorithm (asymptotically and in expectation) for any matroid. Their technique, called {\em interleaved scheduling} -- which is similar in spirit with the {\em interleaved rounding} method of \cite{KI18} -- is essentially a time-correlated arm-sampling scheme, which provides the following two guarantees: (i) at any fixed round, each arm $i$ is sampled with probability $\nicefrac{1}{\tau_i}$ (the maximum possible in any optimal solution), where $\tau_i$ is the blocking time of the arm. (ii) Every time an arm is sampled, its payoff must have returned to its baseline (i.e., is never blocked). Notice that the algorithm of \cite{PC21}, which essentially chooses a maximum-payoff independent set among the sampled arms of each round, also provides an alternative method for reaching the $(1 - \nicefrac{1}{e})$-approximation of \cite{BSSS19} for the special case of one arm per round. 

Cella and Cesa-Bianchi \cite{CCB19} consider a similar model to $1$-\RS, where they assume that all the actions have identical payoff functions up to different scaling (i.e., baseline payoff), but different and unknown recovery times. For the planning problem, by focusing on a simple class of {\em periodic ranking policies}, the authors develop an algorithm, accompanied by a worst-case approximation guarantee that depends on characteristics of the input (payoff functions, recovery times, and baselines). Finally, other special cases and variations of the $k$-\RS problem include payoff functions described by linear dynamics \cite{LKLM21}, sampled by Gaussian processes \cite{PBG19}, and more \cite{LCCBGB21}.

Simchi-Levi et al.\ \cite{SLZZ21} provide the first $\mathcal{O}(1)$-approximation the general case of the $k$-\RS problem. Specifically, by focusing on a particular class of policies, which they call {\em purely periodic}, the authors provide a long-run $\left(1 - \mathcal{O}(\nicefrac{1}{\sqrt{k}})\right)$-approximation for any instance of $k$-\RS. A key-element in both Immorlica and Kleinberg \cite{KI18} and, subsequently, Simchi-Levi et al.\ \cite{SLZZ21} is the construction of a concave relaxation of the optimal solution, based on the critical notion of {\em rate of return}: a piecewise linear function, which denotes the maximum payoff that can be achieved asymptotically by playing an arm at most a specific fraction of rounds. The authors in \cite{KI18} provide an FPTAS for computing an $(1-\epsilon)$-approximate solution to their relaxation (for the case where $k=1$), while, in \cite{SLZZ21}, such a solution is efficiently computed assuming finite recovery times.

\section{Structural Properties of a Natural LP Relaxation: Omitted Proofs}

\restateUpperBound*

\begin{proof}
Let $n_{i,\tau}$ be the number of times arm $i$ is played under delay $\tau \in \mathbb{N}$ in an optimal solution. By fixing $x_{i,\tau} = \frac{n_{i,\tau}}{T}$, we have that
\begin{align*}
    \opt(T) = \sum_{i \in \A} \sum_{\tau \in \mathbb{N}} p_i(\tau) \cdot n_{i,\tau} = T \cdot \sum_{i \in \A} \sum_{\tau \in \mathbb{N}} p_i(\tau) \cdot \frac{n_{i,\tau}}{T} = T \cdot \sum_{i \in \A} \sum_{\tau \in \mathbb{N}} p_i(\tau) \cdot x_{i, \tau}.
\end{align*}
Further, note that in any instance where at most $k$ arms can be played at each round, the total number of arm plays (independently of the delay) cannot be more than $k \cdot T$. Thus, we get
\begin{align*}
    \sum_{i \in \A} \sum_{\tau \in \mathbb{N}} x_{i,\tau} = \frac{1}{T} \cdot \sum_{i \in \A} \sum_{\tau \in \mathbb{N}} n_{i,\tau} \leq  \frac{1}{T} \cdot k \cdot T \leq k.
\end{align*}

Finally, consider any action $i \in \A$ which is played at time $t$ under delay $\tau$. The number of rounds that are occupied for the above specific play belong to the interval $I = \{t-\tau+1, t\}$, with $|I| = \tau$ (note that for $\tau = 1$ only round $t$ is occupied). Recalling our assumption that all actions start with delay $1$ at round $t=1$, then for any such interval we have that $I \subseteq [T]$. Combining the above with the fact that, for any action $i \in \A$, these intervals cannot overlap, we get that 
\begin{align*}
    \sum_{\tau \in \mathbb{N}} \tau \cdot n_{i,\tau} \leq T \Longrightarrow \sum_{\tau \in \mathbb{N}} \tau \cdot x_{i,\tau} \leq 1, \qquad \forall i \in \A. 
\end{align*}
Therefore, since $\x$ is a feasible solution to \eqref{lp:LP} with value equal to $\nicefrac{\opt(T)}{T}$, the proof follows directly. 
\end{proof}

\restateLPsupport*
\begin{proof}
Clearly, proving that any optimal solution to \eqref{lp:LP} is without loss of generality supported in $(x_{i,\tau})_{i \in \A, \tau \in [\tau^{\max}_i]}$ implies that we can restrict ourselves to a polynomial-size version of \eqref{lp:LP}, which is constructed by dropping all variables except for $(x_{i,\tau})_{i \in \A, \tau \in [\tau^{\max}_i]}$. In this way, an optimal extreme point solution to this restricted LP can be computed efficiently using a standard LP solver. 

Let $\x$ be any optimal solution to \eqref{lp:LP} such that there exists an arm $i$ and delay $\tau' > \tau^{\max}_i$ satisfying $x_{i, \tau'} > 0$. Note that by definition of the recovery time, it must hold $p_i(\tau') = p_i(\tau^{\max}_i)$. Thus, we can construct a solution $\x'$ which is identical to $\x$, except for the fact that $x'_{i,\tau'} = 0$ and $x'_{i,\tau^{\max}_i} = x_{i,\tau^{\max}_i} + x_{i,\tau'}$. Since $p_i(\tau') = p_i(\tau^{\max}_i)$ the solutions $\x'$ and $\x$ have exactly the same value, while $\x'$ trivially satisfies constraints \eqref{lp:total}. In addition, constraints \eqref{lp:arm} are still satisfied by $\x'$ for any arm different than $i$. For arm $i$, we have 
\begin{align*}
\sum_{\tau \in \mathbb{N}} \tau \cdot x'_{i, \tau} &= \sum_{\tau \in \mathbb{N} \setminus \{\tau', \tau^{\max}_i\}} \tau \cdot x_{i, \tau} + \tau^{\max}_i \cdot x'_{i, \tau^{\max}_i} \\
&= \sum_{\tau \in \mathbb{N} \setminus \{\tau', \tau^{\max}_i\}} \tau \cdot x_{i, \tau} + \tau^{\max}_i \cdot x_{i,\tau^{\max}_i} +  \tau^{\max}_i \cdot x_{i,\tau'} \\
&\leq \sum_{\tau \in \mathbb{N}} \tau \cdot x_{i,\tau} \\
&\leq 1,
\end{align*}
where the first inequality follows by the fact that $\tau' > \tau^{\max}_i$ and the second by feasibility of $\x$.

By repeating the above process for every arm $i$ and delay $\tau > \tau^{\max}_i$ with $x_{i, \tau'} > 0$, we can transform any solution to \eqref{lp:LP} to an equivalent solution of the desired form. 
\end{proof}


\section{Improved Approximation Guarantees for Planning: Omitted Proofs} \label{sec:analysis}

\restateProbability*
\begin{proof}
Fix any arm $i\in \A'$ and round $t$. By the sampling process of critical delays using the solution $\x^*$, we have that $\Pro{\tau^*_i = \tau} = \tau \cdot x^*_{i,\tau}$. Notice that, when arm $i$ is not the irregular arm, by \Cref{def:delayfeasible} this probability is identically one. Since the offset $r_i$ is sampled uniformly at random from $\{0,1, \ldots, \tau^*_{i} - 1\}$ and $i \in C_t$ if and only if $t~\textbf{mod}~\tau^*_i = r_i$, we get that $\Pro{i \in C_t \mid \tau^*_i = \tau} = \nicefrac{1}{\tau}$. Hence, we can conclude that 
\begin{align*}
    \Pro{i \in C_t \text{ and } \tau^*_i = \tau} = \Pro{\tau^*_i = \tau} \cdot \Pro{i \in C_t \mid \tau^*_i = \tau} = \left(\tau \cdot x^*_{i,\tau}\right) \cdot \frac{1}{\tau} = x^*_{i, \tau}. 
\end{align*}
\end{proof}


\section{Online Learning with Sublinear Regret: Omitted Proofs}

\restateRobustness*

\begin{proof}
Let $\x^*$ and $\widehat{\x}$ be the optimal solution of \eqref{lp:LP} in the case where the payoff functions in the objective are given by $\{p_i\}_{i \in \A}$ and $\{\widehat{p}_i\}_{i \in \A}$, respectively. Let also $V^*$ and $\widehat{V}$ be the optimal value of \eqref{lp:LP} in each of the aforementioned cases. We have 
\begin{align*}
\widehat{V} = \sum_{i \in \A} \sum_{\tau \in \mathbb{N}} \widehat{p}_{i}(\tau) \cdot \widehat{x}_{i,\tau} \geq \sum_{i \in \A} \sum_{\tau \in \mathbb{N}} \widehat{p}_{i}(\tau) \cdot x^*_{i,\tau} \geq \sum_{i \in \A} \sum_{\tau \in \mathbb{N}} \left( {p}_{i}(\tau) - \epsilon \right) \cdot x^*_{i,\tau} \geq V^* - \epsilon \cdot k,
\end{align*}
where in the first inequality we use the fact that $\widehat{\x}$ is an optimal extreme point solution of \eqref{lp:LP}, when the estimates are used in the objective. In the second inequality, we use that $|\widehat{p}_{i}(\tau) - p_{i}(\tau)| \leq \epsilon, \forall i \in \A, \tau \in \mathbb{N}$, while the third follows by constraint \eqref{lp:total} of \eqref{lp:LP}. 

We observe that \Cref{algo} can produce a feasible planning using any given extreme point solution of \eqref{lp:LP} (not necessarily the optimal).
Let $\widehat{A}_t$ be the set of arms played at some round $t \geq \tau^{\max}$ by \Cref{algo} (using the estimates) and let $\widehat{C}_t$ be the corresponding set of candidate arms. By construction of the algorithm, it is the case that 
\begin{align*}
    \widehat{A}_t = \underset{S \subseteq \widehat{C}_t, |S| \leq k}{\text{argmax}} \sum_{i \in S} \widehat{p}_i(\widehat{\sigma}_{i,t}), 
\end{align*}
where $\widehat{\sigma}_{i,t}$ is the actual delay of arm $i$ at time $t$ in a run of \Cref{algo} (using the estimates).

Let $\widehat{\tau}^*_i$ be the (sampled) critical delay of arm $i$ in a run of \Cref{algo} (using the estimates). By working along the lines of \Cref{thm:main1}, using monotonicity of $p_i(\tau)$ and the fact that $\widehat{\sigma}_{i,t} \geq \widehat{\tau}^*_i$, for the expected payoff collected at any round $t \geq \tau^{\max}$ (over the randomness of the payoff realizations), we have
\begin{align*}
    \max_{S \subseteq \widehat{C}_t, |S| \leq k} \sum_{i \in S} p_i(\widehat{\sigma}_{i,t}) \geq \max_{S \subseteq \widehat{C}_t, |S| \leq k} \sum_{i \in S} p_i(\widehat{\tau}^*_i)
    \geq \max_{S \subseteq \widehat{C}_t, |S| \leq k} \sum_{i \in S} \widehat{p}_i(\widehat{\tau}^*_{i}) - \epsilon \cdot k,
\end{align*}
where in the last inequality we use the fact that $\widehat{p}_i$ is $\epsilon$-close to $p_i$ for each arm $i \in \A$. 

Let $\gamma_k$ be the multiplicative approximation ratio of \Cref{algo} for $k$-\RS (given in \Cref{thm:main1}). By the analysis of \Cref{thm:main1}, it follows that 
$$
\Ex{}{\max_{S \subseteq \widehat{C}_t, |S| \leq k} \sum_{i \in S} \widehat{p}_i(\widehat{\tau}^*_{i})} \geq \gamma_k \cdot \widehat{V} \geq \gamma_k \cdot V^* - \gamma_k \cdot \epsilon \cdot k \geq \gamma_k \cdot \frac{\opt(T)}{T} - \gamma_k \cdot \epsilon \cdot k,
$$
where the last inequality follows by \Cref{lem:lpupperbound}.

Therefore, for any round $t \geq \tau^{\max}$, the expected payoff collected by \Cref{algo} (using the estimates) satisfies 
$$
\Ex{}{\max_{S \subseteq \widehat{C}_t, |S| \leq k} \sum_{i \in S} {p}_i(\widehat{\sigma}_{i,t})} \geq  \gamma_k \cdot \frac{\opt(T)}{T} - \mathcal{O} \left(\epsilon \cdot k \right).
$$
The proof follows by summing the above inequalities for every round $t \in [\tau^{\max}, T]$ and using linearity of expectation together with the fact that $\nicefrac{\opt(T)}{T} \leq k$. 
\end{proof}

\restateEstimation*
\begin{proof}
For any arm $i \in \A$ and delay $\tau$, let $\widehat{p}_i(\tau)$ be the empirical average of $m$ samples drawn independently from a distribution of mean $p_i(\tau)$ (and bounded in $[0,1]$). By Hoeffding inequality \cite{Hoeffding}, we have that
$$
\Pro{|\widehat{p}_i(\tau) - {p}_i(\tau)| > \epsilon} \leq 2 \exp\left(- 2 m \epsilon^2\right). 
$$
By setting $m = \frac{1}{2\epsilon^2} \ln(\frac{2\tau^{\max}n}{\delta})$, we get that $|\widehat{p}_i(\tau) - {p}_i(\tau)| \leq \epsilon$ with probability $1 - \frac{\delta}{\tau^{\max}n}$. 

Therefore, by a simple union bound over all arms $i \in \A$ and delays $\tau \in [\tau^{\max}]$, we get  
$$
\Pro{\exists i \in \A, \tau \in [\tau^{\max}] : |\widehat{p}_i(\tau) - {p}_i(\tau)| > \epsilon} \leq \delta,
$$
thus, conlcuding the proof. 
\end{proof}

\restateFactEstimation*
\begin{proof}
For any arm $i \in \A$ and delay $\tau$, we can collect $m$ independent samples from the distribution of mean $p_i(\tau)$, by playing arm $i$ for $m$ times -- once every $\tau$ rounds -- which can be achieved in $\mathcal{O}(m \tau)$ time steps. In order to collect $m$ samples for any delay $\tau \in [\tau^{\max}]$ for a fixed arm $i$, we thus need $\mathcal{O}\left(\sum_{\tau \in [\tau^{\max}]} m \tau\right) = \mathcal{O}\left(m (\tau^{\max})^2\right)$ rounds in the worst case.

Assuming that we can play at most one arm per round, in order to collect $m$ samples for each arm/delay pair, we need $\mathcal{O}(nm(\tau^{\max})^2)$ rounds in total. 

In a $k$-\RS instance, where we can play at most $k < n$ arms per round, we can parallelize and, thus, expedite the above sampling process by a factor of $k$, leading to a total of $\mathcal{O}(nm(\tau^{\max})^2/k)$ exploration rounds.
\end{proof}

\restateRegret*

\begin{proof}
Let $\text{R}(T)$ be the expected payoff collected by our ETC-based bandit algorithm in $T$ rounds. Recall that the $\gamma_k$-approximate regret for $T$ rounds is defined as 
$$
\text{Reg}_{\gamma_k}(T) = \gamma_k \opt(T) - \text{R}(T).
$$

For $\epsilon, \delta \in (0,1)$ (to be specified later), let $m = \frac{1}{2\epsilon^2} \ln\left(\nicefrac{2 \tau^{\max} n}{\delta}\right)$. By \Cref{fact:estimation}, we can collect $m$ samples from the distribution of each arm-delay pair in the first $\mathcal{O}\left(\frac{n (\tau^{\max})^2}{\epsilon^2 k}\ln(\frac{\tau^{\max} n}{\delta})\right)$ rounds. Since the mean payoffs are all bounded in $[0,1]$ and we can play at most $k$ arms per round, the regret accumulated for each of the exploration rounds is at most $k$, which implies that the total regret due to exploration can be upper bounded by $\mathcal{O}\left(\frac{n (\tau^{\max})^2}{\epsilon^2}\ln(\frac{\tau^{\max} n}{\delta})\right)$.

Moving on to the exploitation phase, for every arm $i \in \A$ and delay $\tau \in [\tau^{\max}]$, let $\widehat{p}_i(\tau)$ be the empirical estimate of $p_i(\tau)$ using $m$ samples collected in the exploration phase. For the rest $T - \mathcal{O}(\frac{n (\tau^{\max})^2}{\epsilon^2 k}\ln(\frac{\tau^{\max} n}{\delta})) \leq T$ rounds, by \Cref{lem:estimation}, we get that, with probability $1- \delta$, it holds $|\widehat{p}_i(\tau) - {p}_i(\tau)| \leq \epsilon$ for every arm $i$ and delay $\tau$. We refer to the above event as a ``nice'' sampling. 

Thus, the total regret accumulated in the exploitation phase in the case where sampling is not nice (which happens with probability $\delta$) is at most $T \cdot k \cdot \delta$. In the case where the sampling is nice, then \Cref{lem:robustness} suggests that for the remaining $T'$ rounds of the exploitation phase, we have
$$
R(T') \geq \gamma_k \opt(T') - \mathcal{O}(k \cdot \epsilon \cdot T' + k \cdot \tau^{\max}),
$$
which implies that the total regret accumulated in the exploitation phase can be upper bounded by $\mathcal{O}(k \cdot \epsilon \cdot T + k \cdot \tau^{\max} + T \cdot k \cdot \delta)$. By the above analysis, it follows that for any $\epsilon, \delta \in (0,1)$, the total regret of our bandit algorithm satisfies
\begin{align*}
\text{Reg}_{\gamma_k}(T) \leq \underbrace{\mathcal{O}\left(\frac{n (\tau^{\max})^2}{\epsilon^2}\ln(\frac{\tau^{\max} n}{\delta})\right)}_{\text{Exploration phase}}  + \underbrace{\mathcal{O}\bigg(k \cdot \epsilon \cdot T + k \cdot \tau^{\max} + T \cdot k \cdot \delta\bigg)}_{\text{Exploitation phase}}.
\end{align*}

By setting $\epsilon = \mathcal{O}\left(\sqrt[3]{\frac{n (\tau^{\max})^2 \ln(\tau^{\max} n T)}{k T}} \right)$ and $\delta = \frac{1}{T}$, we can upper bound the regret as
$$
\text{Reg}_{\gamma_k}(T) \leq \mathcal{O}\left(n^{1/3} \cdot (k \cdot \tau^{\max})^{2/3} \ln^{1/3}(\tau^{\max} n T) \cdot T^{2/3}  + k \cdot \tau^{\max}\right),
$$
thus proving that our algorithm achieves a sublinear (in the time horizon) regret guarantee.
\end{proof}

\end{document}